\theoremstyle{plain}
\newtheorem{thm}{Theorem}[section]
\newtheorem{cor}[thm]{Corollary}
\newtheorem{lem}[thm]{Lemma}
\newtheorem{prop}[thm]{Proposition}
\theoremstyle{definition}
\newtheorem{rem}[thm]{Remark}
\newtheorem{dfn}[thm]{Definition}
\numberwithin{equation}{section}
\title{\bf Theoretical Error Analysis of Entropy Approximation\\ for Gaussian Mixtures}
\date{}
\author[*,1,a]{Takashi Furuya}
\author[,2,b]{Hiroyuki Kusumoto\thanks{Equal contribution.}}
\author[3]{\\Koichi Taniguchi}
\author[4]{Naoya Kanno}
\author[5]{Kazuma Suetake}
\affil[1]{{\small Education and Research Center for Mathematical and Data Science, Shimane University, Japan}}
\affil[a]{{\small Email: takashi.furuya0101@gmail.com}\vspace{3mm}}
\affil[2]{{\small Graduate School of Mathematics, Nagoya University, Japan}}
\affil[b]{{\small Email: kusumoto-108@outlook.com}\vspace{3mm}}
\affil[3]{{\small Mathematical and Systems Engineering, 
Shizuoka University, Japan}}
\affil[4]{{\small Graduate School of Biomedical Engineering, Tohoku University, Japan}}
\affil[5]{{\small AISIN SOFTWARE, Japan}}
\begin{document}

\maketitle

\begin{abstract}
Gaussian mixture distributions are commonly employed to represent general probability distributions. Despite the importance of using Gaussian mixtures for uncertainty estimation, the entropy of a Gaussian mixture cannot be calculated analytically. In this paper, we study the approximate entropy represented as the sum of the entropies of unimodal Gaussian distributions with mixing coefficients. This approximation is easy to calculate analytically regardless of dimension, but there is a lack of theoretical guarantees. 
We theoretically analyze the approximation error between the true and the approximate entropy to reveal when this approximation works effectively. 
This error is essentially controlled by how far apart each Gaussian component of the Gaussian mixture is. 
To measure such separation, we introduce the ratios of the distances between the means to the sum of the variances of each Gaussian component of the Gaussian mixture, and we reveal that the error converges to zero as the ratios tend to infinity. 
In addition, the probabilistic estimate indicates that this convergence situation is more likely to occur in higher-dimensional spaces. 
Therefore, our results provide a guarantee that this approximation works well for high-dimensional problems, such as neural networks that involve a large number of parameters.
\end{abstract}

{\small{\bf Keywords} :
Gaussian mixture, Entropy approximation, 
Approximation error estimate, 
Upper and lower bounds}

\section{Introduction}\label{Introduction}

Entropy is a fundamental measure of uncertainty in information theory with many applications in machine learning, data compression, and image processing. 
In machine learning, for instance,  entropy is a key component in the evidence lower bound (ELBO) in the variational inference \citep{hinton1993keeping, barber1998ensemble, bishop2006PRML} and the variational autoencoder \citep{kingma2013auto}.
It also plays a crucial role in the data acquisition for Bayesian optimization \citep{frazier2018tutorial} and active learning \citep{settles2009active}.
In the real-world scenario, 
unimodal Gaussian distribution is widely used and offers computational advantages because its entropy can be calculated analytically, which offers computational advantages.
However, unimodal Gaussian distributions are limited in the approximation ability, and in particular, they hardly approximate multimodal distributions, which are often assumed, for instance, as posterior distributions of Bayesian neural networks (BNNs) \citep{mackay1992practical, neal2012bayesian} (see, e.g., \citet{fort2019deep}).
On the other hand, Gaussian mixture distribution (the superposition of Gaussian distributions) can capture the multimodality, and in general, approximate any continuous distribution in some topology (see, e.g., \citet{bacharoglou2010approximation}). 
Unfortunately, the entropy of Gaussian mixture lacks a closed form, 
thereby necessitating the use of entropy approximation methods.

There are numerous approximation methods for estimating the entropy of Gaussian mixture (see Section \ref{Related Works}).
One common method for approximating the entropy of a Gaussian mixture is to compute the weighted sum of the entropies of the individual unimodal Gaussian components with mixing coefficients (see \eqref{def:approx_of_entropy}), and this approximate entropy is easy to calculate analytically. However, despite its empirical success, this approximation lacks theoretical guarantees. The purpose of this paper is to reveal under what conditions this approximation performs well and to provide some theoretical insights on this approximation. Our contributions are as follows:

\begin{itemize}
\item 
{\bf (Main result: New error bounds)} We provide new upper and lower bounds of the approximation error. These bounds show that the error is essentially controlled by the ratios $\alpha_{k,k'}$ (which are given in Definition \ref{def:alpha}) of the distances between the means to the sum of the variances of each (Gaussian) component of the Gaussian mixture, and the error converges to zero as the ratios tend to infinity (Theorem~\ref{main_result_1}). 
Consequently, we provide an ``almost'' necessary and sufficient condition for the approximate entropy to be valid (Remark~\ref{rem:NScondition}).

\item {\bf (Probabilistic error bound)}
To confirm the effectiveness of the approximation for higher-dimension problems, we also provide the approximation error bound in the form of a probabilistic inequality (Corollary \ref{prob_ineq_cor}).
In supplementary of \citet{gal2016dropout}, it is mentioned (without mathematical proof) that this approximate entropy tends to be the true one in high-dimensional spaces when the means of the Gaussian mixture are randomly distributed. Our probabilistic inequality is a rigorous and generalized result of what they mention and shows the usefulness of this approximation in high-dimensional problems.
Moreover, we numerically demonstrate this result in a simple case and show its superiority over several other methods (Section \ref{sec:experiment-error}).

\item {\bf (Error bound for derivatives)} 
For example in machine learning, not only the approximation of entropy but also its partial derivatives are required in backpropagation. 
Therefore, we also provide the upper bounds for the partial derivatives of the error with respect to parameters (Theorem \ref{derivative-entropy}), which ensure that the derivatives of the approximate entropy are also close enough to the true ones when the ratios are large enough.

\item {\bf (More detailed analysis in the special case)}
We conduct a more detailed analysis of the error bounds in a special case. 
More precisely, when all covariance matrices coincide, we provide an explicit formula on the entropy of Gaussian mixture with the integral dimension-reduced to its component number (Proposition \ref{explicit form 1}). Then, by using this formula, we improve and simplify the upper and lower bounds of the error (Theorem \ref{common upper bound}) and the probabilistic inequality (Corollary~\ref{prob_ineq_common}). In this special case, we obtain a necessary and sufficient condition for this approximation to converge to the true entropy (Remark~\ref{rem:common}).
\end{itemize}

\section{Related work}\label{Related Works}

In numerical computation of the entropy of Gaussian mixtures, 
the approximation by Monte Carlo estimation is often used. 
However, it may require a large number of samples for high accuracy, leading to high computational costs. 
Furthermore, the Monte Carlo estimator gives a stochastic approximation, and hence, it does not guarantee deterministic bounds (confidence intervals may be used). 
There are numerous deterministic approximation methods (see, e.g., \citet{hershey2007approximating}). For example, approximation methods based on upper or lower bounds of the entropy are often used.
That is, we try to obtain an approximation by estimating the upper or lower bounds of the entropy or we adopt the bounds as an approximate entropy (see, e.g., \citet{bonilla2019generic,hershey2007approximating,nielsen2016guaranteed,zobay2014variational}). A typical one is to use the lower bound of the entropy based on Jensen’s inequality (see  \citet{bonilla2019generic}). 
These approximations have the advantage of being analytically calculated in closed forms, whereas there is no theoretical guarantee that they work well in the context of machine learning such as variational inference.
As another typical method, \citet{huber2008entropy} proposed the entropy approximation by a combination of the Taylor approximation with the splitting method of Gaussian mixture components. Recently, \citet{Dahlke2023Convergence} provided new approximations using Taylor and  Legendre series, and they 
theoretically and experimentally analyzed these approximations together with the approximation in \citet{huber2008entropy}. 
Notably, 
\citet{Dahlke2023Convergence} theoretically showed sufficient conditions for the approximations to be convergent. However, their performance remains unexplored in higher-dimensional cases.
As an approximation that is easy to calculate regardless of dimensions, the sum of the entropies of unimodal Gaussian distributions is often used.
For example, the approximation entropy represented as the ``sum with mixing coefficients'' of the entropies of unimodal Gaussian distributions (see Remark~\ref{rem:comp-melb}),
which is equal to the true one when all Gaussian components coincide, is investigated in \cite{melbourne2022differential}.
On the other hand, \citet{gal2016dropout} used the approximation represented as the sum of the entropies of ``unimodal Gaussian distributions with mixture coefficients'' in the context of variational inference from an intuition that this approximation tends to the true one in high-dimensional spaces when the means of the mixture are randomly distributed.
In our work, we focus on the theoretical error estimation for this approximation and reveal that this approximation converges to the true one when all Gaussian components are
far apart or in high-dimensional cases.

\section{Entropy approximation for Gaussian mixtures}\label{sec:Entropy-approximtion}
The entropy of probability distribution $q(w)$ is defined by
\begin{align}
\label{eq:entropy-definition}
H[q] \coloneqq -\int_{\mathbb{R}^m} q(w) \log(q(w))\,dw.
\end{align}
Here, we choose a probability distribution $q(w)$ as the Gaussian mixture distribution, that is,
\[
q(w)= \sum_{k=1}^{K}\pi_{k} \mathcal{N}(w\,|\,\mu_{k}, \Sigma_{k}), \quad w \in \mathbb{R}^m,
\]
where $K \in \mathbb{N}$ is the number of mixture components, and $\pi_k \in (0,1]$ are mixing coefficients constrained by $\sum_{k=1}^{K}\pi_k=1$.
Here, $\mathcal{N}(\mu_{k}, \Sigma_{k})=\mathcal{N}(w\,|\, \mu_{k}, \Sigma_{k})$ is the Gaussian distribution with a mean $\mu_k \in \mathbb{R}^{m}$ and (positive definite) covariance matrix $\Sigma_k \in \mathbb{R}^{m \times m}$, that is,
\[
\mathcal{N}(w\,|\,\mu_{k}, \Sigma_{k}) = \frac{1}{\sqrt{(2\pi)^{m} |\Sigma_{k}|}}\exp\left(-\frac{1}{2}\left\|w-\mu_k\right\|_{\Sigma_k}^{2}\right),
\]
where $|\Sigma_{k}|$ is the determinant of matrix $\Sigma_{k}$, and $\|x\|_{\Sigma}^{2} \coloneqq x \cdot( \Sigma^{-1}x)$ for a vector $x \in \mathbb{R}^{m}$ and a positive definite matrix $\Sigma \in \mathbb{R}^{m \times m}$.
However, the entropy term $H[q]$ cannot be computed analytically when the distribution $q(w)$ is a Gaussian mixture.

We define the approximate entropy $\widetilde{H}[q]$ by the sum of the entropies of ``unimodal Gaussian distributions with mixture coefficients'': 
\begin{equation}
\begin{split}
\widetilde{H}[q]
\coloneqq&\, -\sum_{k=1}^{K} \int_{\mathbb{R}^m} \pi_k\mathcal{N}(w\,|\,\mu_k, \Sigma_{k}) \log\left(\pi_{k} \mathcal{N}(w\,|\,\mu_k, \Sigma_{k}) \right)dw \\
= & \, \frac{m}{2} +  \frac{m}{2} \log 2 \pi + \frac{1}{2} \sum_{k=1}^{K} \pi_{k} \log |\Sigma_{k}| 
- \sum_{k=1}^{K} \pi_{k} \log \pi_{k}, 
\label{def:approx_of_entropy}
\end{split}
\end{equation}
which can be computed analytically.
It is obvious that 
$H[q]=\widetilde{H}[q]$ holds for unimodal Gaussian (i.e., the case $K=1$). Moreover, it is shown that 
\[
0 \le \widetilde{H}[q] - H[q] \le -2\sum_{k=1}^K \pi_k \log \pi_k\ (\leq 2\log K)
\]
(see \eqref{computation1} in Appendix and Remark \ref{rem:comp-melb}). 
These bounds show that the error does not blow up with respect to the mean $\mu_k$, the covariance $\Sigma_{k}$, and the dimension $m$ if the number of mixture components $K$ is fixed. In addition,
these bounds imply that $|\widetilde{H}[q] - H[q]| \to 0$ as 
the Gaussian mixture $q$ converges to a unimodal Gaussian (i.e., $\pi_k \to 1$ for some $k$ and $\pi_{k'} \to 0$ for all $k'\not =k$).
It is natural to ask under what other conditions this approximation $H[q] \approx \widetilde{H}[q]$ can be justified.
In Section \ref{entropy approximation}, we will provide an ``almost'' necessary and sufficient condition for this approximation to be valid.

\section{Theoretical error analysis of the entropy approximation}\label{entropy approximation}

In this section, we analyze the approximation error $|H[q] - \widetilde{H}[q]|$ to theoretically justify the entropy approximation $\widetilde{H}[q] \approx H[q]$.

We introduce the following notation to state our results.

\begin{dfn}\label{def:alpha}
For two Gaussian distributions $\mathcal{N}(\mu_{k}, \Sigma_{k})$ and $\mathcal{N}(\mu_{k^{\prime}}, \Sigma_{k^{\prime}})$, we define
$\alpha_{\{k,k'\}}$ by
\begin{equation} \label{alpha_(k)}
\alpha_{\{k,k'\}}\coloneqq \max
\Big\{
\alpha >0 : \{x \in \mathbb{R}^m : \|x-\mu_k\|_{\Sigma_k}< \alpha\} \cap \{x \in \mathbb{R}^m : \|x-\mu_{k'}\|_{\Sigma_{k'}}< \alpha\} = \varnothing
\Big\},
\end{equation}
and $\alpha_{k,k^{\prime}}$ by
\begin{equation}
\alpha_{k,k^{\prime}}\coloneqq\frac{\|\mu_{k}-\mu_{k^{\prime}}\|_{\Sigma_k} }{1+\|\Sigma_k^{-\frac{1}{2}}\Sigma_{k^{\prime}}^{\frac{1}{2}} \|_{\rm op}}, 
\label{alpha_k}
\end{equation}
where $k, k^{\prime} \in [K]\coloneqq\{1,\ldots,K\}$ and $\| \cdot \|_{\rm op}$ is the operator norm (i.e., the largest singular value).
\end{dfn}
\noindent
{\bf Interpretation of $\alpha$:}
We can interpret that $\alpha_{\{k,k'\}}$ and $\alpha_{k,k^{\prime}}$ measure distances of two Gaussian distributions in some sense respectively.
Here notice that $\alpha_{\{k,k'\}}$ is symmetric (i.e., $\alpha_{\{k,k'\}}$ is always equal to $\alpha_{\{k',k\}}$), but $\alpha_{k,k'}$ is not necessarily symmetric (i.e., $\alpha_{k,k'}$ is not always equal to $\alpha_{k',k}$).

Figure~\ref{illustration1} shows the geometric interpretation of $\alpha_{k,k^{\prime}}$ in the isotropic case, that is, $\Sigma_k=\sigma^{2}_{k}I$ and $\Sigma_{k^{\prime}}=\sigma^{2}_{k^{\prime}}I$.
In this case, 
$\alpha_{k,k^{\prime}}$ has a symmetric form with respect to $k, k'$ as
\[
\alpha_{\{k,k'\}} = \alpha_{k,k^{\prime}} = \alpha_{k^{\prime},k} = \frac{|\mu_{k}-\mu_{k^{\prime}}|}{\sigma_k + \sigma_{k^{\prime}}}.
\]
Here, the volume of $\mathcal{N}(\mu_k,\sigma_k^2 I)$ on $B(\mu_k,\alpha \sigma_k)$ is equal to that of $\mathcal{N}(\mu_{k'},\sigma_{k'}^2 I)$ on $B(\mu_{k'},\alpha \sigma_{k'})$,
where $B(\mu, \sigma)\coloneqq\{x \in \mathbb{R}^{m}: |x-\mu| < \sigma \}$ for $\mu \in \mathbb{R}^{m}$ and $\sigma>0$.
If all distances $|\mu_{k}-\mu_{k^{\prime}}|$ between the means go to infinity, or all variances $\sigma_k$ go to zero, etc., then $\alpha_{k, k^{\prime}}$ go to infinity for all pairs of $k, k^{\prime}$. 
Furthermore, if all means $\mu_k$ are normally distributed (variances $\sigma_k$ are fixed), then an expected value of $\alpha_{k,k^{\prime}}^{2}$ is in proportion to the dimension $m$.
That intuitively means that the expected value of $\alpha_{k, k^{\prime}}$ becomes large as the dimension $m$ of the parameter increases.

\begin{figure}[t]
\center
\includegraphics[keepaspectratio, scale=0.28]{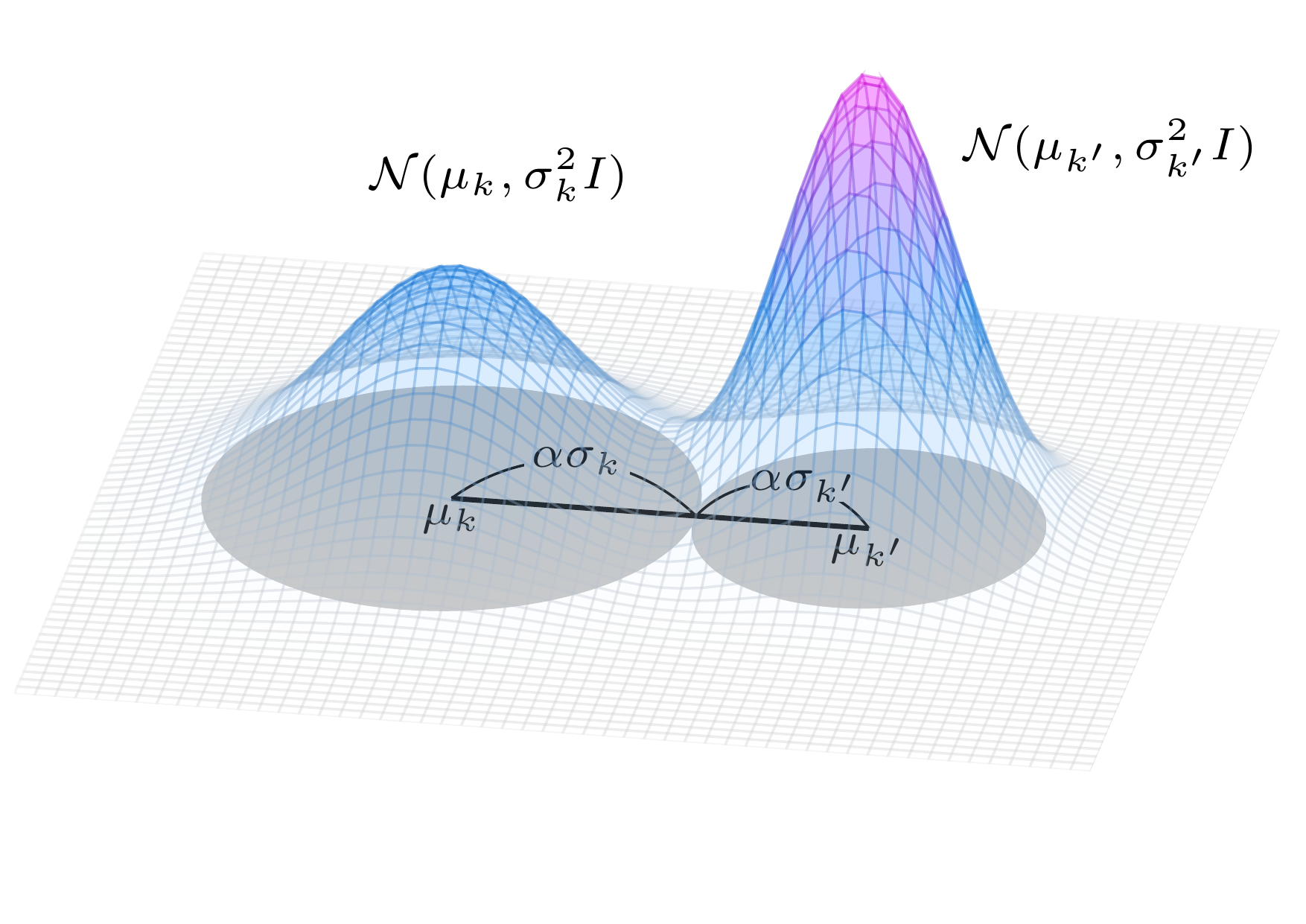}
\vspace{-1cm}
\caption{\small Illustration of $\alpha=\alpha_{\{k,k'\}}$ ($m=2$, isotropic)}
\label{illustration1}
\end{figure}
\begin{figure}[t]
\begin{minipage}[h]{0.45\linewidth}
\includegraphics[keepaspectratio, scale=0.25]{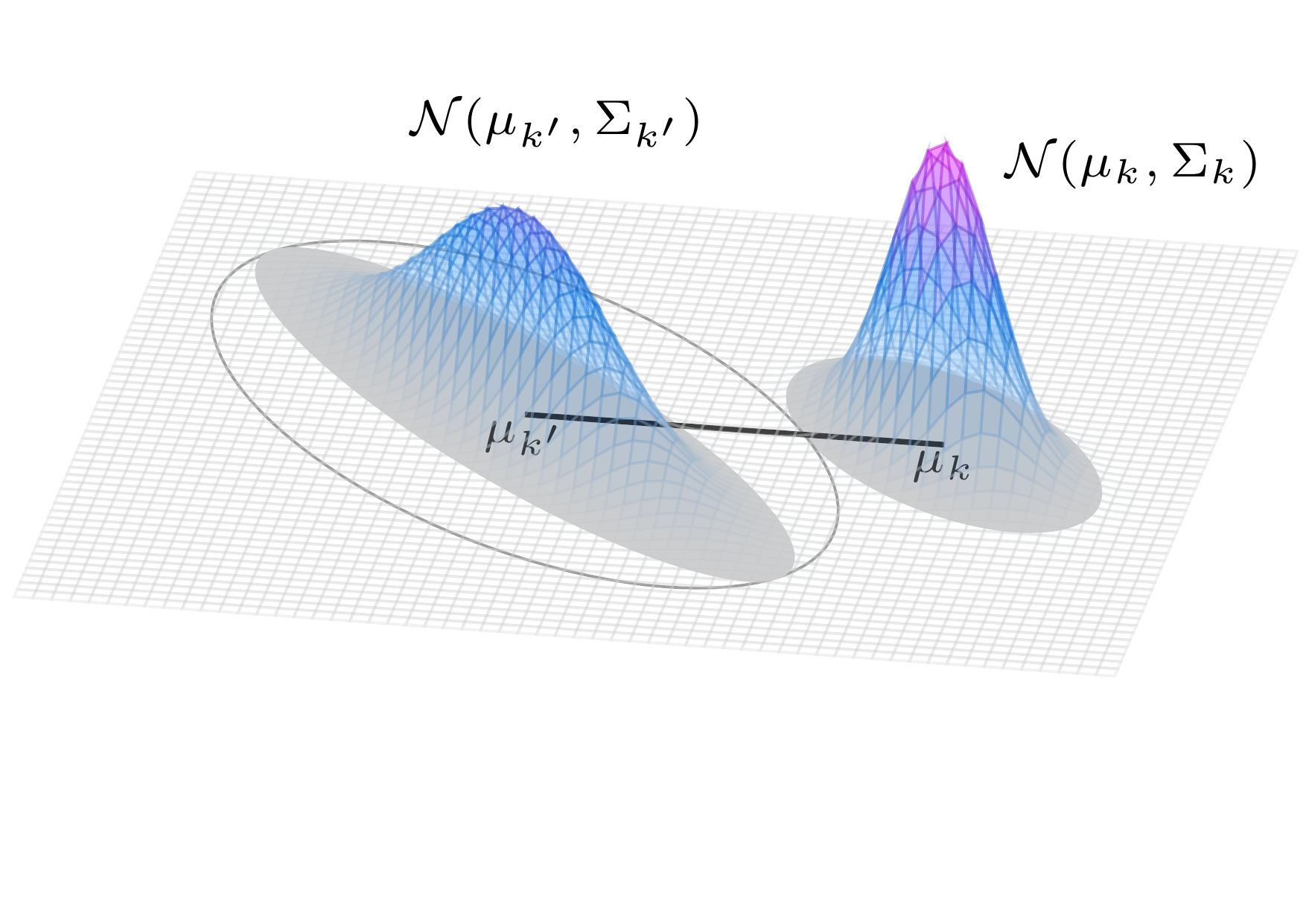}
\end{minipage}
\begin{minipage}[h]{0.45\linewidth}
\includegraphics[keepaspectratio, scale=0.28]{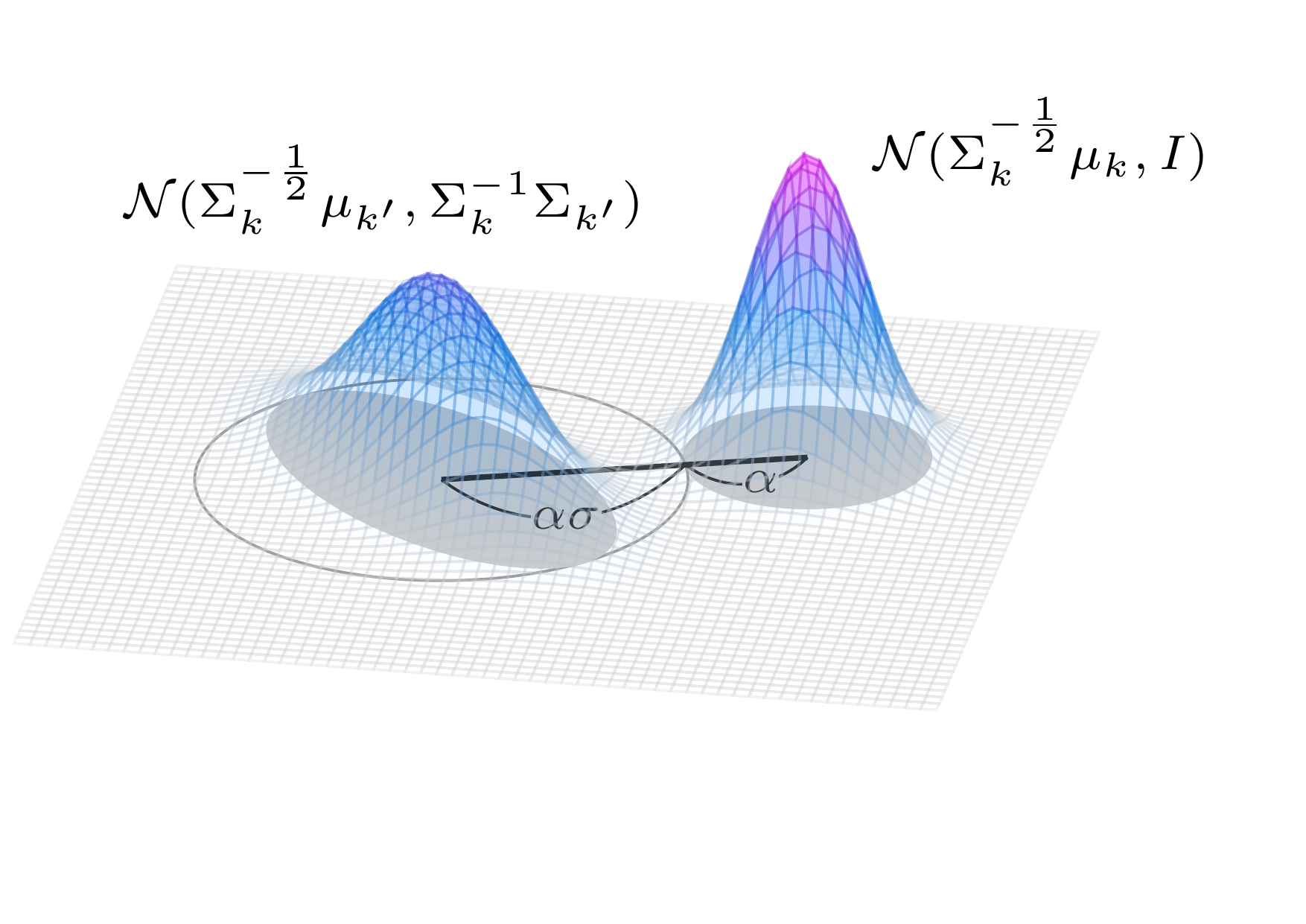}
\end{minipage}
\vspace{-1.5cm}
\caption{\small Illustration of $\alpha=\alpha_{k,k'}$ ($m=2$, anisotropic, $\sigma=\|\Sigma_k^{-\frac{1}{2}}\Sigma_{k'}^{\frac{1}{2}}\|_{\rm op})$}
\label{illustration2}
\vspace{20pt}
\end{figure}

These intuitions also hold in anisotropic case. From definition \eqref{alpha_k}, we have 
\begin{equation*}
\alpha_{k,k'}+\alpha_{k,k'}\|\Sigma_k^{-\frac{1}{2}}\Sigma_{k'}^{\frac{1}{2}}\|_{\rm op}=|\Sigma_k^{-\frac{1}{2}}\mu_k-\Sigma_k^{-\frac{1}{2}}\mu_{k'}|.
\end{equation*}
This shows that the circumsphere of $\{x \in \mathbb{R}^m : \|x-\Sigma_k^{-\frac{1}{2}}\mu_{k'}\|_{\Sigma_k^{-1}\Sigma_k'} < \alpha_{k,k'}\}$ circumscribes $B(\Sigma_k^{-\frac{1}{2}}\mu_k,\alpha)$ (see Figure~\ref{illustration2} right).
Moreover, by coordinate transformation $x \to \Sigma_k^{\frac{1}{2}}x$, we can interpret that $\alpha_{k,k'}$ is a distance of $\mathcal{N}(\mu_k,\Sigma_k)$ and $\mathcal{N}(\mu_{k'},\Sigma_{k'})$ from the perspective of $\Sigma_k$ (see Figure~\ref{illustration2} left), and $\alpha_{k,k'}$ gives a concrete form of $\alpha$ satisfying ${\{ \|x-\mu_k\|_{\Sigma_k}< \alpha\} \cap \{\|x-\mu_{k'}\|_{\Sigma_{k'}}< \alpha\} = \varnothing}$, that is, $\alpha_{k,k'} \leq \alpha_{\{k,k'\}}$ (Lemma~\ref{remark_alpha}).

\subsection{General covariance case}\label{General case}
We study the error $|H[q] - \widetilde{H}[q]|$ for general covariance matrices $\Sigma_k$.
First, we give the following upper and lower bounds for the error.
\begin{thm}\label{main_result_1}
Let $s \in [0,1)$. Then
\begin{equation}
\begin{split}
& 
\sum_{k=1}^{K}\sum_{k^{\prime}\neq k}
\frac{\pi_k\pi_{k^{\prime}}}{1-\pi_k}
c_{k,k^{\prime}}\log
\Biggl(
1+\frac{1-\pi_k}{\pi_k} \frac{|\Sigma_{k}|^{\frac{1}{2}}}{\displaystyle \max_{l}|\Sigma_{l}|^{\frac{1}{2}}}
\exp
\Biggl(
-\frac{
\Bigl(1+\|\Sigma_{k^{\prime}}^{-\frac{1}{2}}\Sigma_{k}^{\frac{1}{2}} \|_{\rm op}
\Bigr)^2
}{2}
\alpha_{k^{\prime},k}^{2}
\Biggr)
\Biggr) \\
& 
\leq 
\left|H[q] - \widetilde{H}[q] \right|\leq \frac{2}{(1-s)^{\frac{m}{4}}}\sum_{k=1}^{K}\sum_{k^{\prime}\neq k} \sqrt{\pi_k\pi_{k^{\prime}}} \exp\left(-\frac{s\alpha_{k,k^{\prime}}^{2}}{4} \right), \label{main_result_eq}
\end{split}
\end{equation}
where the coefficient $c_{k,k^{\prime}}$ is defined by
\begin{equation*}
c_{k,k^{\prime}}\coloneqq\frac{1}{\sqrt{(2\pi)^m}}\int_{\mathbb{R}^{m}_{k,k^{\prime}}}\exp\left(-\frac{|y|^2}{2}\right)dy \geq 0, \label{c_k}
\end{equation*}
and the set $\mathbb{R}^{m}_{k,k^{\prime}}$ is defined by
\[
\mathbb{R}^{m}_{k,k^{\prime}}
\coloneqq\left\{ y \in \mathbb{R}^m : 
\begin{array}{cc}
y\cdot y \geq (\Sigma_{k}^{\frac{1}{2}}\Sigma_{k^{\prime}}^{-1}\Sigma_{k}^{\frac{1}{2}}y)\cdot y,
\\
y\cdot (\Sigma_{k}^{\frac{1}{2}}\Sigma_{k^{\prime}}^{-1} (\mu_{k^{\prime}} -\mu_k) ) \geq 0 
\end{array}
\right\}.
\]
Moreover, the same upper bound holds for $\alpha_{\{k,k'\}}$ instead of $\alpha_{k,k'}$:
\begin{equation}\label{main_result_eq_2}
\left| H[q] - \widetilde{H}[q] \right|\leq \frac{2}{(1-s)^{\frac{m}{4}}}\sum_{k=1}^{K}\sum_{k^{\prime}\neq k} \sqrt{\pi_k\pi_{k^{\prime}}} \exp\left(-\frac{s\alpha_{\{k,k'\}}^{2}}{4} \right).
\end{equation}
\end{thm}

The proof is given by calculations with the triangle inequality, the Cauchy-Schwarz inequality, and a change of variables. 
For upper bound, we use the inequality $\log(1+ x) \leq \sqrt{x}$ $(x \geq 0)$, and  we split the integral region to $|y|<\alpha_{k,k'}$ and $|y|>\alpha_{k,k'}$ in order to utilize the characteristic of $\alpha_{k,k'}$. 
While, for the lower bound, we use the concavity of function $x \mapsto \log(1+x)$ $(x > 0)$ and the definition of $\mathbb{R}^m_{k,k'}$.
See Appendix~\ref{Proof of Theorem 4.2} for the proof of Theorem \ref{main_result_1}.

\begin{rem}\label{rem:NScondition}
Theorem \ref{main_result_1} implies the following facts:
\begin{itemize}
\item[\rm (i)] 
According to the upper bound in \eqref{main_result_eq}, the approximation $H[q] \approx \widetilde{H}[q]$ is valid when 
\begin{itemize}
    \item[\rm (a)] $\pi_k \pi_{k'} \to 0$ or $\alpha_{k,k^{\prime}} \to +\infty$ for all pairs $k, k^{\prime} \in [K]$ with $k \neq k^{\prime}$
\end{itemize}
for $s\in (0,1)$.
In particular, 
the error exponentially decays to zero 
when all $\alpha_{k,k'}$ go to zero. 
\item[\rm (ii)]
According to the upper bound \eqref{main_result_eq_2}, the approximation $H[q] \approx \widetilde{H}[q]$ is also valid when 
\begin{itemize}
    \item[\rm (b)] $\pi_k \pi_{k'} \to 0$ or $\alpha_{\{k,k'\}} \to +\infty$ for all pairs $k, k^{\prime} \in [K]$ with $k \neq k^{\prime}$
\end{itemize}
for $s\in (0,1)$.
Moreover, the upper bound in \eqref{main_result_eq_2} is better than that in \eqref{main_result_eq} since $\alpha_{k,k'} \leq \alpha_{\{k,k'\}}$ always holds (see Lemma~\ref{remark_alpha}).
\item[\rm (iii)]
According to the lower bound in \eqref{main_result_eq}, if $c_{k,k^{\prime}}>0$ and $\|\Sigma_{k'}^{-\frac{1}{2}}\Sigma_k^{\frac{1}{2}}\|_{\rm op}\le C$ hold for all pairs $k,k'$ and some constant $C>0$ independent of $k,k'$, then (a) is necessary for this approximation to be valid (where we note that $|\Sigma_k|^{\frac{1}{2}}/|\Sigma_{k'}|^{\frac{1}{2}}=|\Sigma_{k'}^{-\frac{1}{2}}\Sigma_k^{\frac{1}{2}}| \le C^m$ when $\|\Sigma_{k'}^{-\frac{1}{2}}\Sigma_k^{\frac{1}{2}}\|_{\rm op}\le C$, and that $\min_k|\Sigma_k|^{\frac{1}{2}}/\max_k|\Sigma_k|^{\frac{1}{2}}\geq C^{-m}$ holds).
It is unclear whether all $c_{k,k^{\prime}}$ are positive, but we can show that either $c_{k,k'}$ or $c_{k',k}$ is always positive for any pair $k,k'$ (see Remark \ref{remarkA}).
\item[\rm (iv)]
From the above facts and the symmetry of $\alpha_{\{k,k'\}}$, we conclude that (b) is a necessary and sufficient condition for the approximation $H[q] \approx \widetilde{H}[q]$ to be valid if $\|\Sigma_{k'}^{-\frac{1}{2}}\Sigma_k^{\frac{1}{2}}\|_{\rm op}\le C$ hold for all pairs $k,k'$ and some constant $C>0$ independent of $k, k'$.
\end{itemize}
\end{rem}

\begin{rem}
In Theorem~\ref{main_result_1}, 
the parameter $s$ plays a role of adjusting the convergence speed as follows:
\begin{enumerate}
\renewcommand{\labelenumi}{\textup{(\roman{enumi})}}
    \item In the case of $s=0$, the upper bound does not imply the convergence, and the following bound is obtained:
    \begin{equation*}
        \left|H[q]-\widetilde{H}[q]\right| \leq 2\sum_{k=1}^{K}\sum_{k^{\prime}\neq k} \sqrt{\pi_k\pi_{k^{\prime}}}
        \leq \sum_{k=1}^{K}\sum_{k^{\prime}\neq k} (\pi_k+\pi_k') 
        = \sum_{k=1}^K \{(K-1)\pi_k+(1-\pi_k) \}
        =2(K-1).
    \end{equation*}
    \item Consider the upper bound (focusing on the term of $k,k'$ pair in summation) of~\eqref{main_result_eq} as a function with respect to $s$, it is minimal on $s=1-m/\alpha_{k,k'}^2$ if $\alpha_{k,k'} \geq \sqrt{m}$, and monotonically increase in $s \in [0,1)$ if $\alpha_{k,k'}<\sqrt{m}$.
    Replacing $s$ on each $k,k'$ summation in the proof of Theorem~\ref{main_result_1} with minimal points $s_{k,k'}=1-m/\alpha_{k,k'}^2$, we obtain more precise upper bound
    \begin{equation*}
        \left|H[q] - \widetilde{H}[q] \right|
        \leq 2 \sum_{k=1}^K \sum_{k' \neq k} \sqrt{\pi_k \pi_{k'}}\left(\frac{\alpha_{k,k'}^2}{m}\right)^\frac{m}{4}\exp\left(\frac{m-\alpha_{k,k'}^2}{4}\right),
    \end{equation*}
    which converges to zero when dimension $m$ goes to infinity, if $\alpha_{k,k'}>\sqrt{m}$ for all pairs $k,k'$. 
\end{enumerate}
\end{rem}

\begin{rem}
\label{rem:comp-melb}
\citet{melbourne2022differential} has explored the entropy approximation of mixtures represented as the ``sum with mixing coefficients'' of the entropies of unimodal Gaussian distributions
(without mixing coefficients in the logarithmic term):
\begin{equation}\label{H_melbourne}
    \widetilde{H}_{\rm Melbourne}[q]
    \coloneqq -\sum_{k=1}^{K}\pi_{k} \int_{\mathbb{R}^m} \mathcal{N}(w\,|\,\mu_k, \Sigma_{k}) \log\left(\mathcal{N}(w\,|\,\mu_k, \Sigma_{k}) \right)dw
    =\widetilde{H}[q]+\sum_{k=1}^K \pi_k \log \pi_k.
\end{equation}
This approximate entropy is equal to the true entropy $H[q]$ when all Gaussian components coincide (i.e., $\mu_k=\mu_{k'}$ and $\Sigma_k=\Sigma_{k'}$ for all $k,k' \in [K])($see \cite[Theorem I.1]{melbourne2022differential}), but not when all Gaussian components are far apart (i.e., $\alpha_{k,k'}\to \infty$ for all $k,k' \in[K]$).
In contrast, while the approximation $\widetilde{H}[q]$ differs from $H[q]$ when all Gaussian components coincide, it converges to $H[q]$ when all Gaussian components are far apart (see Theorem~\ref{main_result_1}).
This is because $\widetilde{H}
_{\rm Melbourne}[q]$ differs by $-\sum_{k=1}^{K} \pi_{k} \log \pi_{k}$ from $\widetilde{H}[q]$ (refer to \eqref{def:approx_of_entropy}).
Moreover, using also \citet[Lemma XI.2]{wang2014beyond}, the other upper bound
\begin{equation}\label{bound:log}
    \left|H[q] - \widetilde{H}[q] \right|
    \leq \Bigl|H[q]-H_{\rm Melbourne}[q]\Bigr|+\left|H_{\rm Melbourne}[q]-\widetilde{H}[q]\right|
    \leq -2\sum_{k=1}^K \pi_k \log \pi_k \leq 2\log K
\end{equation}
is obtained, where the last inequality is justified by constraint $\sum_{k=1}^K \pi_k =1$.
\end{rem}

\begin{rem}
In Lemma~\ref{part1}, we obtain another upper bound $K/2$, which is slightly better than 
$2\log K$ in \eqref{bound:log} when $2\le K\le 8$. 
\end{rem}


Next, we provide the probabilistic inequality for the error as a corollary of Theorem~\ref{main_result_1}.

\begin{cor}\label{prob_ineq_cor}
Let $c>0$.
Take 
$\{\mu_k\}_k$ and $\{\Sigma_{k}\}_k$ such that 
\begin{equation}
\frac{\Sigma_{k}^{-\frac{1}{2}}(\mu_k-\mu_{k^{\prime}})}{1+\|\Sigma_k^{-\frac{1}{2}}\Sigma_{k^{\prime}}^{\frac{1}{2}} \|_{\rm op}} \sim \mathcal{N}(0, c^{2}I)
\label{random_mu_k}
\end{equation}
for all pairs $k, k^{\prime} \in [K]$ ($k\neq k^{\prime}$), 
that is, the left-hand side follows a Gaussian distribution with zero mean and an isotropic covariance matrix $c^{2}I$.
Then, for $\varepsilon>0$ and $s \in (0,1)$,
\begin{equation}
\begin{split}
&P\left(\left|H[q] - \widetilde{H}[q] \right| \geq \varepsilon \right) \leq \frac{2(K-1)}{\varepsilon}\left(\sqrt{1-s} \left(1+\frac{s c^2}{2}\right) \right)^{-\frac{m}{2}}.
\end{split}
\label{prob_ineq_gen}
\end{equation}
\end{cor}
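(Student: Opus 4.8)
The plan is to derive \eqref{prob_ineq_gen} from the deterministic upper bound already proved in Theorem~\ref{main_result_1}, by combining Markov's inequality with a chi-square moment generating function computation. The starting observation is that $\|\mu_k-\mu_{k'}\|_{\Sigma_k}^2 = |\Sigma_k^{-1/2}(\mu_k-\mu_{k'})|^2$, so by \eqref{alpha_k} the scalar $\alpha_{k,k'}$ is exactly the Euclidean norm of the random vector appearing on the left-hand side of \eqref{random_mu_k}. Under the hypothesis that vector is $\mathcal{N}(0,c^2 I)$ in $\mathbb{R}^m$, and hence $\alpha_{k,k'}^2/c^2$ follows a chi-square distribution with $m$ degrees of freedom, for every ordered pair $k\neq k'$. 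Note that no independence between distinct pairs is needed in what follows.

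First I would apply Markov's inequality,
\begin{equation*}
P\!\left(\left|H[q_\theta]-\widetilde H[q_\theta]\right|\geq\varepsilon\right)\leq\frac{1}{\varepsilon}\,E\!\left[\left|H[q_\theta]-\widetilde H[q_\theta]\right|\right],
\end{equation*}
and then bound the expectation using the second branch of the upper bound in \eqref{main_result_eq}, which dominates $|H[q_\theta]-\widetilde H[q_\theta]|$ for every realization of $\{\mu_k\}$. By linearity of expectation this reduces the task to computing $E[\exp(-s\alpha_{k,k'}^2/4)]$ for each pair. Since for $X\sim\chi^2_m$ one has $E[e^{tX}]=(1-2t)^{-m/2}$ whenever $t<1/2$, taking $X=\alpha_{k,k'}^2/c^2$ and $t=-sc^2/4$ (which is negative, hence well inside the domain of validity because $s\in(0,1)$) gives $E[\exp(-s\alpha_{k,k'}^2/4)]=(1+sc^2/2)^{-m/2}$. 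Substituting back,
\begin{equation*}
P\!\left(\left|H[q_\theta]-\widetilde H[q_\theta]\right|\geq\varepsilon\right)\leq\frac{2}{\varepsilon(1-s)^{m/4}}\left(1+\frac{sc^2}{2}\right)^{-\frac{m}{2}}\sum_{k=1}^K\sum_{k'\neq k}\sqrt{\pi_k\pi_{k'}}.
\end{equation*}

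Finally I would bound the double sum over the mixing coefficients. Because $\bigl(\sum_k\sqrt{\pi_k}\bigr)^2 = 1+\sum_{k}\sum_{k'\neq k}\sqrt{\pi_k\pi_{k'}}$ and $\bigl(\sum_k\sqrt{\pi_k}\bigr)^2\leq K\sum_k\pi_k=K$ by Cauchy--Schwarz together with $\sum_k\pi_k=1$, we obtain $\sum_{k}\sum_{k'\neq k}\sqrt{\pi_k\pi_{k'}}\leq K-1$. Rewriting $(1-s)^{-m/4}(1+sc^2/2)^{-m/2}=\bigl(\sqrt{1-s}\,(1+sc^2/2)\bigr)^{-m/2}$ then yields exactly \eqref{prob_ineq_gen}.

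I do not anticipate a substantive obstacle for this corollary; it is essentially a clean consequence of Theorem~\ref{main_result_1}. The only delicate points are verifying that the exponent $t=-sc^2/4$ stays inside the region where the chi-square moment generating function is finite (it does, being negative, so no constraint on $c$ or $m$ arises), and being careful that Markov's inequality is applied to the deterministic bound rather than directly to $|H[q_\theta]-\widetilde H[q_\theta]|$, which is legitimate precisely because that bound dominates $|H[q_\theta]-\widetilde H[q_\theta]|$ pointwise in the randomness of $\{\mu_k\}$ and $\{\Sigma_k\}$.
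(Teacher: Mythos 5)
Your proposal is correct and follows essentially the same route as the paper's proof: Markov's inequality applied to the expectation of the second branch of the upper bound in Theorem~\ref{main_result_1}, followed by the $\chi^2_m$ moment generating function evaluated at $t=-sc^2/4$ and the bound $\sum_{k}\sum_{k'\neq k}\sqrt{\pi_k\pi_{k'}}\leq K-1$. The only cosmetic difference is that you obtain the last bound via Cauchy--Schwarz on $\bigl(\sum_k\sqrt{\pi_k}\bigr)^2$, whereas the paper uses $\sqrt{\pi_k\pi_{k'}}\leq(\pi_k+\pi_{k'})/2$; both give the same constant.
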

The proof is a combination of Markov's inequality and the upper bound in Theorem~\ref{main_result_1}. 
We also use the moment generating function of $\alpha_{k,k^{\prime}}^{2}/c^2$ which follows the $\chi^{2}$-distribution  by the assumption \eqref{random_mu_k}.
See Appendix~\ref{Proof of Corollary4.3} for the details.

When \eqref{random_mu_k} holds, an expected value of $\alpha^2_{k,k'}$ is $c^2 m$. Hence, if $\alpha^2_{k,k'}$ is regarded as $c^2 m$, then 
for $c>1$ there exists $s\in (0,1)$ such that the upper bound in \eqref{main_result_eq} expectedly converges to zero as the dimension $m$ goes to infinity.
Furthermore, Corollary \ref{prob_ineq_cor} justifies \citet[Proposition 1 in Appendix A]{gal2016dropout}, which formally mentions that $\widetilde{H}[q]$ tends to $H[q]$ when means $\mu_k$ are normally distributed, all elements of covariance matrices $\Sigma_k$ do not depend on $m$, and $m$ is large enough.
In fact, the right-hand side of \eqref{prob_ineq_gen} converges to zero as $m \to \infty$ for some $s \in (0,1)$ if $c > 1$.
\vspace{3mm}
\par
We also study the derivatives of the error $|H[q] - \widetilde{H}[q]|$ with respect to learning parameters $\theta= (\pi_{k}, \mu_{k}, \Sigma_{k})_{k=1}^{K}$.
For simplicity, we write
\[
\Gamma_k\coloneqq\Sigma^{\frac{1}{2}}_k.
\]
We give the following upper bounds for the derivatives of the error.
\begin{thm}\label{derivative-entropy}
Let $k \in [K]$, $p,q \in [m]$, and $s \in (0,1)$. Then
\begin{align*}
\mathrm{(i)} 
&
\quad
\left| \frac{\partial}{\partial \mu_{k,p}} \left(H[q] - \widetilde{H}[q]\right) \right|
\leq
\frac{2}{(1-s)^{\frac{m+2}{4}}}
\sum_{k^{\prime}\neq k}
\sqrt{\pi_k \pi_{k^{\prime}}}
\left( \left\| \Gamma_{k^{\prime}}^{-1} \right\|_{1}+
\left\| \Gamma_{k}^{-1} \right\|_{1} \right)
\exp
\left(
-\frac{s\alpha_{k,k'}^{2}}{4}
\right),
\\
\hspace{-0.5cm}
\mathrm{(ii)} 
&
\quad
\left| \frac{\partial}{\partial \gamma_{k,pq}} \left(H[q] - \widetilde{H}[q]\right) \right| 
\notag \\
& \quad \leq
\frac{6}{(1-s)^{\frac{m+4}{4}}}
\sum_{k^{\prime}\neq k}
\sqrt{\pi_k \pi_{k^{\prime}}}
\left(
2|\Gamma_k|^{-1}|\Gamma_{k, pq}|
+
\left\| \Gamma_{k}^{-1} \right\|_{1}
+
\left\| \Gamma_{k'}^{-1} \right\|_{1}
\right)
\exp
\left(
-\frac{s\alpha_{k,k'}^{2}}{4}
\right)
\\ &\quad \text{for} \ \gamma_{k,pq}\in \mathbb{R} \ \text{satisfying} \ \|\Gamma_k^{-1}\|_1<\infty,\\
\mathrm{(iii)} 
&
\quad
\left| \frac{\partial}{\partial \pi_{k}} \left(H[q] - \widetilde{H}[q]\right) \right|
\leq
\frac{8}{(1-s)^{\frac{m}{4}}}
\sum_{k^{\prime}\neq k}
\sqrt{\frac{\pi_{k^{\prime}}}{\pi_k}}
\exp
\left(
-\frac{s\alpha_{k,k'}^{2}}{4}
\right),
\end{align*}
where $\mu_{k,p}$ and $\gamma_{k,pq}$ is the $p$-th and $(p,q)$-th components of vector $\mu_k$ and matrix $\Gamma_{k}$, respectively, and $\left\| \cdot \right\|_{1}$ is the entry-wise matrix $1$-norm, and $|\Gamma_{k,pq}|$ is the determinant of the $(m-1)\times(m-1)$ matrix that results from deleting $p$-th row and $q$-th column of matrix $\Gamma_{k}$.
Moreover, the same upper bounds hold for $\alpha_{\{k,k'\}}$ instead of $\alpha_{k,k'}$.
\end{thm}

The proof is given by similar calculations and techniques to the proof of Theorem~\ref{main_result_1}.
For the details, see Appendix~\ref{sec-proof of derivative entropy}. 

We observe that even in the derivatives of the error, the upper bounds exponentially decay to zero as $\alpha_{k,k^{\prime}}$ go to infinity for all pairs $k, k^{\prime} \in [K]$ with $k \neq k^{\prime}$. 
We can also show that if means $\mu_k$ are normally distributed with certain large standard deviation $c$, then the probabilistic inequality like Corollary~\ref{prob_ineq_cor} that the bound converges to zero as $m$ goes to infinity is obtained.

\subsection{Coincident covariance case}\label{sec:coincident-covariance-case}
We study the error $|H[q] - \widetilde{H}[q]|$ for coincident covariance matrices, that is,
\begin{equation}
\label{assumption_common}
\Sigma_k = \Sigma\quad \mbox{for all} \ k \in [K],
\end{equation}
where $\Sigma \in \mathbb{R}^{m \times m}$ is a positive definite matrix.
In this case, $\alpha_{k,k^{\prime}}$ have the form as
\[
\alpha_{\{k,k'\}}=\alpha_{k,k^{\prime}}=\alpha_{k',k}=\frac{\left\|\mu_{k}-\mu_{k^{\prime}}\right\|_{\Sigma} }{2}.
\]
In this case, a more detailed analysis can be done.
First, we show the following explicit form of the true entropy $H[q]$.
\begin{prop}\label{explicit form 1}
Let $m \geq K \geq 2$.
Then
\begin{equation}
H[q]  = \widetilde{H}[q] - \sum_{k=1}^{K}\frac{\pi_k}{(2\pi)^{\frac{K-1}{2}}}
\int_{\mathbb{R}^{K-1}} \exp\left(-\frac{|v|^2}{2}\right)
\log\left(1 + \sum_{k^{\prime}\neq k}  \frac{\pi_{k^{\prime}}}{\pi_k}\exp\left(\frac{|v|^2 - \left|v-u_{k^{\prime},k}\right|^{2}}{2}\right) \right)dv,
\label{explicit_form}
\end{equation}
where $u_{k^{\prime},k}\coloneqq[R_{k} \Sigma^{-1/2} (\mu_{k^{\prime}} - \mu_{k})]_{1:K-1} \in \mathbb{R}^{K-1}$ and $R_{k} \in \mathbb{R}^{m \times m}$ is some rotation matrix such that
\begin{equation}
\label{assumption_rotation}
R_{k}\Sigma^{-\frac{1}{2}}(\mu_{k^{\prime}} - \mu_k ) \in \mathrm{span}\{e_1, \cdots,e_{K-1}\}, \,\,\, k^{\prime} \in [K].
\end{equation}
Here, $\{e_i\}_{i=1}^{K-1}$ is the standard basis in $\mathbb{R}^{K-1}$, and
$u_{1:K-1}\coloneqq(u_1,\ldots,u_{K-1})^{T}\in \mathbb{R}^{K-1}$ for $u=(u_1,\ldots,u_m)^{T}\in \mathbb{R}^{m}$.
\end{prop}
The proof is given by certain rotations and polar transformations. For the details, see Appendix \ref{Proof of Lemma 4.4}.

We note that the special case $K=2$ of \eqref{explicit_form} can be found in \citet[Appendix A]{zobay2014variational}.
Using Proposition \ref{explicit form 1}, we have the following upper and lower bounds for the error.
\begin{thm}\label{common upper bound}
Let $m \geq K \geq 2$ and $s \in [0,1)$.
Then
\begin{align}
&\frac{1}{2}\sum_{k=1}^{K}\sum_{k^{\prime}\neq k} \frac{\pi_k\pi_{k^{\prime}}}{1-\pi_k}
\log\left(1+ \frac{1-\pi_k}{\pi_k}\exp(-2\alpha_{k^{\prime},k}^{2})\right) \\
&\leq \left|H[q] - \widetilde{H}[q] \right|
\leq
\frac{2}{(1-s)^{\frac{K-1}{4}}}\sum_{k=1}^{K}\sum_{k^{\prime}\neq k} \sqrt{\pi_k\pi_{k^{\prime}}} \exp\left(-\frac{s\alpha_{k,k^{\prime}}^{2}}{4}\right). \label{common upper bound_upper}
\end{align}
\end{thm}
The upper bound is proved by the argument in proof of the second upper bound in Theorem~\ref{main_result_1} with the explicit form \eqref{explicit_form} and $|u_{k,k^{\prime}}|= \left\|\mu_k -\mu_{k^{\prime}} \right\|_{\Sigma}$. 
The lower bound is given by applying that in Theorem~\ref{main_result_1} in the case of $\Sigma_k=\Sigma$ for all $k \in [K]$ by remarking $c_{k,k^{\prime}}=1/2$ in that case.

\begin{rem} \label{rem:common}
Theorem \ref{common upper bound} implies the following facts:
\begin{itemize}
    \item[\rm (i)] When all covariance matrices coincide, the condition (a) or (b) in Remark~\ref{rem:NScondition} is a necessary and sufficient condition for the approximation $H[q] \approx \widetilde{H}[q]$ to be valid.
    \item[\rm (ii)] The upper bound in \eqref{common upper bound_upper} is sharper than that in \eqref{main_result_eq} of Theorem \ref{main_result_1} when $m \geq K$.
\end{itemize}
\end{rem}

\begin{cor}\label{prob_ineq_common}
Let $m \geq K \geq 2$ and $c>0$.
Take 
$\{\mu_k\}_k$ and $\Sigma$ such that 
\[
\frac{\Sigma^{-\frac{1}{2}}(\mu_k-\mu_{k^{\prime}})}{2} \sim \mathcal{N}(0, c^{2}I),
\]
for all pairs $k,k^{\prime} \in [K]$ ($k\neq k^{\prime}$). 
Then, for $\varepsilon>0$ and $s \in (0,1)$,
\[
P\left(\left|H[q] - \widetilde{H}[q] \right| \geq \varepsilon \right) \leq \frac{2(K-1)}{\varepsilon (1-s)^{\frac{K-1}{4}}}\left(1+\frac{s c^2}{2} \right)^{-\frac{m}{2}}.
\]
\end{cor}
The proof is given by the same way as in the proof of Corollary \ref{prob_ineq_cor}, which uses 
the Markov's inequality, the upper bound of Theorem~\ref{common upper bound}, and the moment generating function for $\chi^{2}$-distribution.

Note that, in Corollary~\ref{prob_ineq_common}, the assumption $c>1$, which is required in Corollary~\ref{prob_ineq_cor}, is no longer necessary for zero convergence.
\section{Experiment}\label{sec:experiment-error}

We numerically examined the approximation capabilities of the approximate entropy \eqref{def:approx_of_entropy} compared with \citet{huber2008entropy}, \citet{bonilla2019generic}, and the Monte Carlo integration.
Generally, we cannot compute the entropy~\eqref{eq:entropy-definition} in a closed form.
Therefore, we restricted the setting of the experiment to the case for the coincident covariance matrices (Section~\ref{sec:coincident-covariance-case}), in particular $\Sigma = I$, and 
the number of mixture components $K=2$, where we obtained the more tractable formula for the entropy~\eqref{eq:entropy-k2}.
In this setting, we investigated the relative error between the entropy and each approximation method (see Figure~\ref{fig:error-graph}).
The details for the experimental setting and exact formulas for each method are shown in Appendix~\ref{app:experiment-error}.

From the result in Figure~\ref{fig:error-graph}, we can observe the following.
First, the relative error of ours shows faster decay than others in higher dimensions $m$.
Therefore, the approximate entropy \eqref{def:approx_of_entropy} has an advantage in higher dimensions.
Second, the graph for ours scales in the $x$-axis as $c$ scales, which is consistent with the expression of the upper bound in Corollary~\ref{prob_ineq_common}.
Finally, ours is robust against varying mixing coefficients, which cannot be explained by Corollary~\ref{prob_ineq_common}.
Note that we can hardly conduct a similar experiment for $K>2$ because we cannot prepare the tolerant ground truth of the entropy.
For example, even the Monte Carlo integration is not suitable for the ground truth already in the case for $K=2$ due to its large relative error around $10^{-3}$.

\begin{figure*}[t]
\centering
\includegraphics[width=0.7\linewidth]{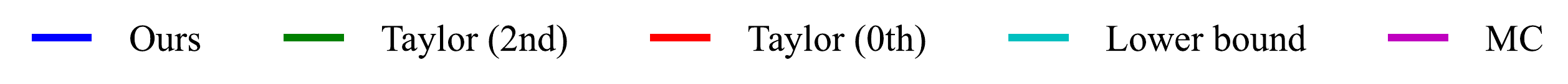} \\
\begin{minipage}[b]{0.375\linewidth}
    \centering
    \includegraphics[width=\linewidth]{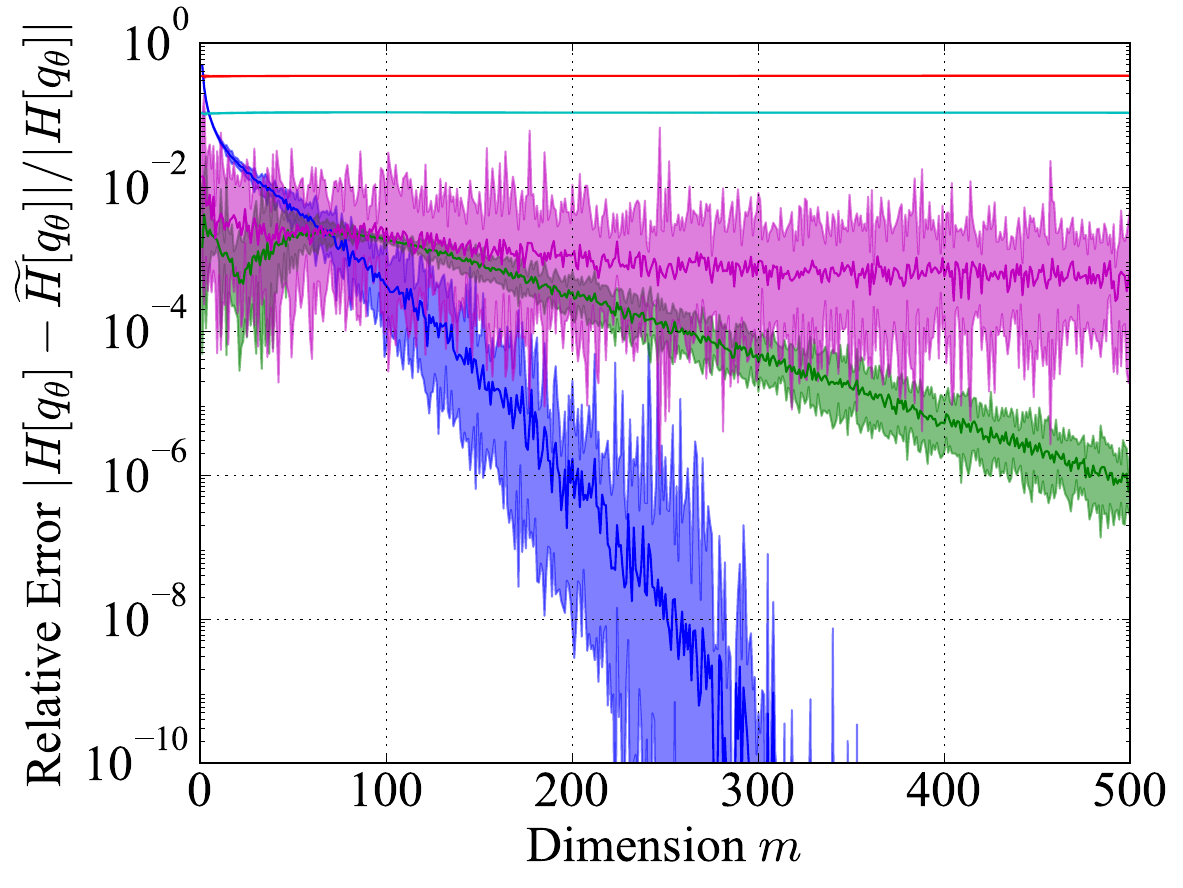}
    \begin{center}\vspace{-8pt}\small (a) \ $c=0.1$, $\pi_1=0.5$, $\pi_2=0.5$\end{center}
\end{minipage}
\hspace{1cm}
\begin{minipage}[b]{0.375\linewidth}
    \centering
    \includegraphics[width=\linewidth]{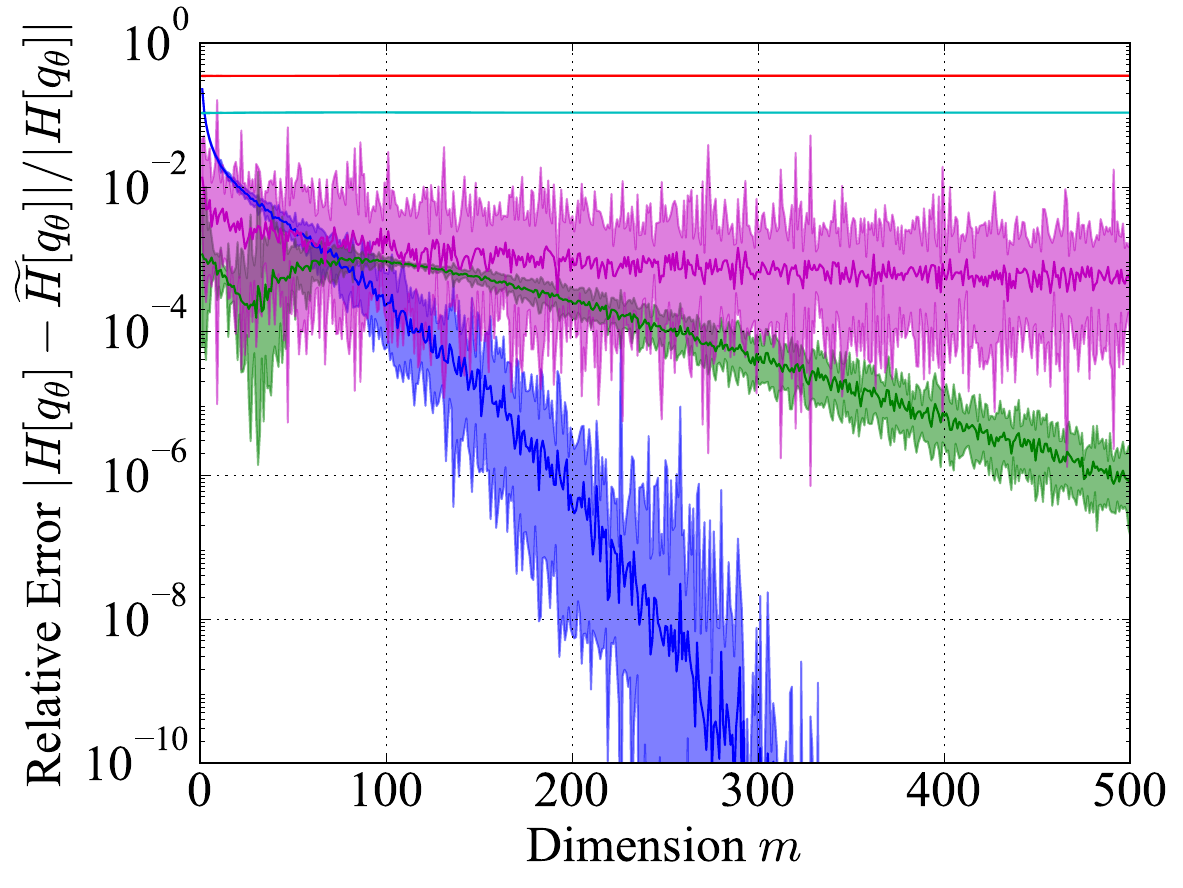}
     \begin{center}\vspace{-8pt}\small (b) \ $c=0.1$, $\pi_1=0.1$, $\pi_2=0.9$\end{center}
\end{minipage}\vspace{5pt}
\\
\begin{minipage}[b]{0.375\linewidth}
    \centering
    \includegraphics[width=\linewidth]{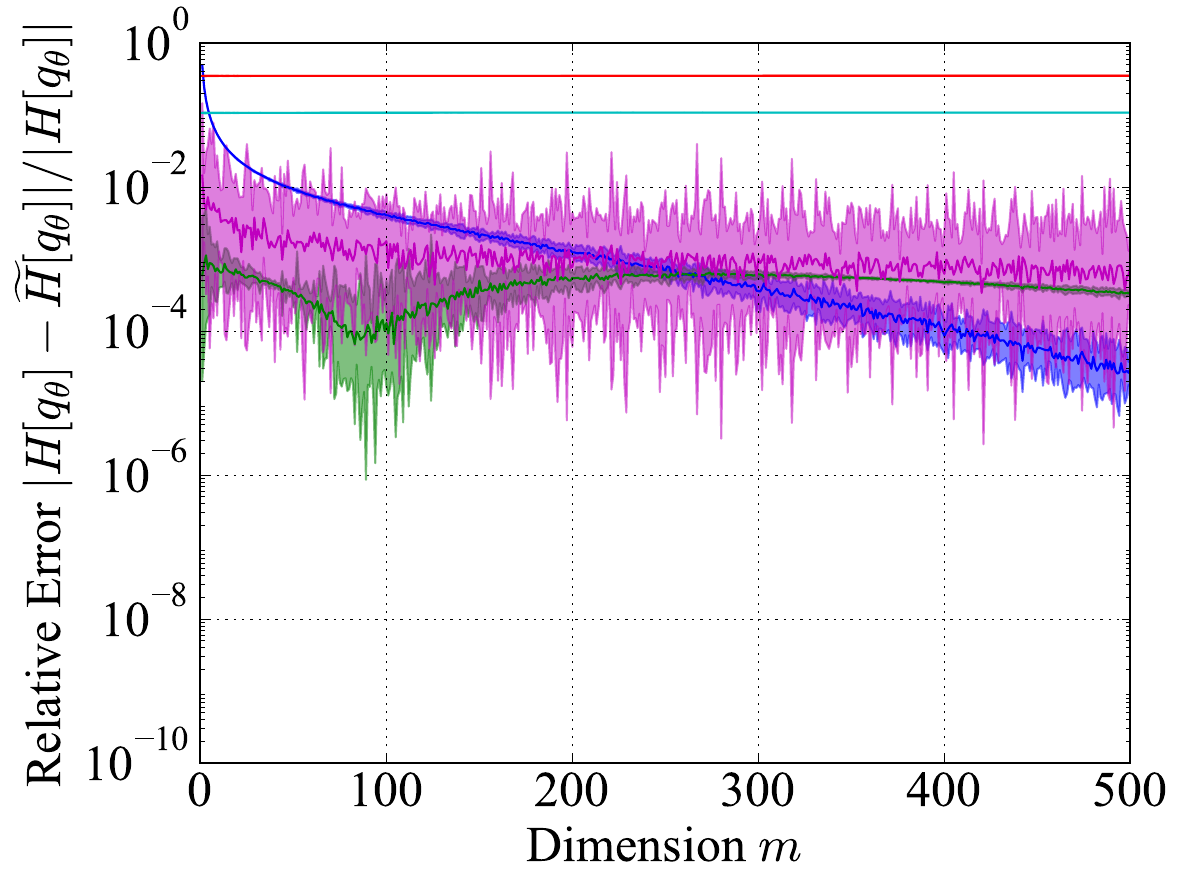}
     \begin{center}\vspace{-8pt}\small (c) $c=0.05$, $\pi_1=0.5$, $\pi_2=0.5$\end{center}
\end{minipage}
\hspace{1cm}
\begin{minipage}[b]{0.375\linewidth}
    \centering
    \includegraphics[width=\linewidth]{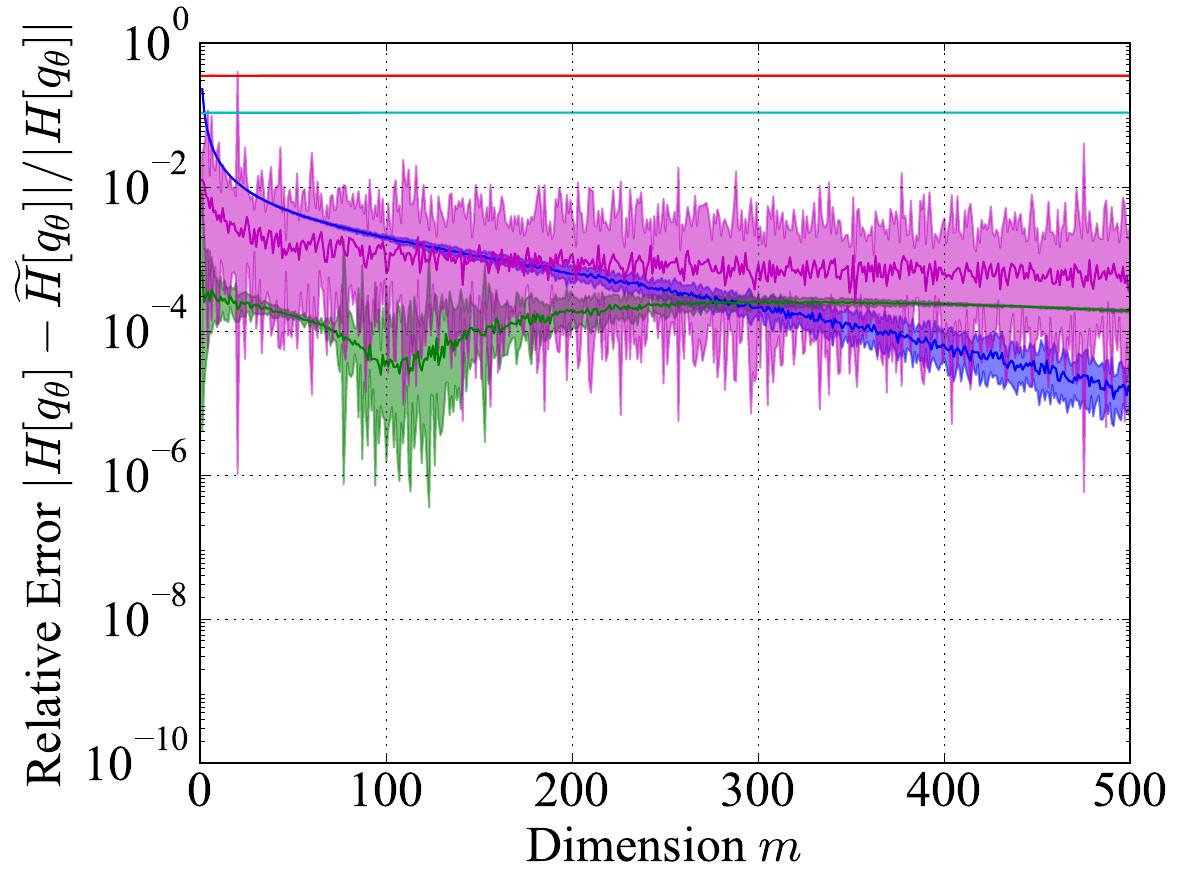}\begin{center}\vspace{-8pt}\small (d) $c=0.05$, $\pi_1=0.1$, $\pi_2=0.9$\end{center}
\end{minipage}\vspace{5pt}
\caption{\small Relative error $|H[q] - \widetilde{H}_*[q]|/|H[q]|$ for the true entropy $H[q]$ and the approximation ones $\widetilde{H}_*[q]$.
Each line indicates the mean value for 500 samples and the filled region indicates the min--max interval.
$c$ denotes the same symbol in Corollary~\ref{prob_ineq_common}.
Methods of Ours, Taylor ($2$nd), Taylor ($0$th), and Lower bound denote $\widetilde{H}_{{\rm ours}}[q]$, $\widetilde{H}_{{\rm Huber}(2)}[q]$, $\widetilde{H}_{{\rm Huber}(0)}[q]$, and $\widetilde{H}_{{\rm Bonilla}}[q]$ in Appendix~\ref{app:experiment-error}, respectively.
A method of MC denotes the Monte Carlo integration with $1000$ sampling points.
}
\label{fig:error-graph}
\vspace{20pt}
\end{figure*}

\section{Limitations and future work}

The limitations and future work are as follows:

\begin{itemize}

\item When all covariance matrices coincide, a necessary and sufficient condition is obtained for the approximation $H[q] \approx \tilde{H}[q]$ to be valid (see (i) in Remark~\ref{rem:common}). However, in the general covariance case, it has not yet been obtained without the constraints for covariance matrices (Remark~\ref{rem:NScondition}).
Improving the lower bound (or upper bound) to find a necessary and sufficient condition is a future work.

\item
There is an unsolved problem on the standard deviation $c$ in \eqref{random_mu_k} of Corollary \ref{prob_ineq_cor}. 
According to this corollary,
the approximation error almost surely converges to zero as $m\to\infty$
if we take $c>1$ (the discussion after Corollary \ref{prob_ineq_cor}). However, it is unsolved whether the condition $c>1$ is 
optimal or not for the convergence. 
According to Corollary \ref{prob_ineq_common}, the condition $c>1$ can be removed in the particular case
$\Sigma_k = \Sigma$ for all $k \in [K]$. 

\item The approximate entropy \eqref{def:approx_of_entropy} is valid only when $\alpha_{k,k^{\prime}}$ are large enough.
However, since there are situations where $\alpha_{k,k^{\prime}}$ are likely to be small, such as the low-dimensional latent space of a variational autoencoder \citep{kingma2013auto}, it is worthwhile to propose an appropriate entropy approximation for small $\alpha_{k,k^{\prime}}$ situation.
Although approximation $H_{\rm Melbourne}$ of~\eqref{H_melbourne} seems to be an appropriate one for small $\alpha_{k,k'}$ situation, the criteria (e.g., some value of $\alpha_{k,k'}$) for using either of $\widetilde{H}$ and $H_{\rm Melbourne}$ is unclear.

\item 
Further enrichment of experiments is important, such as the large component case, and comparison of the derivatives of entropy.

\item 
One important application of the entropy approximation is variational inference. In Appendix~\ref{Variational inference for Bayesian neural networks}, we include an overview of variational inference and an experiment on the toy task. 
However, these are not sufficient to determine the effectiveness of this approximation for variational inference. 
For instance, the variational inference maximizes the ELBO in \eqref{eq:elbo}, which includes the entropy term.
Investigating how this approximate entropy dominates other terms will be interesting for future work.
\end{itemize}

\subsection*{Acknowledgments}
TF was partially supported by JSPS KAKENHI Grant Number JP24K16949.
\newpage
\bibliographystyle{plainnat}
\bibliography{arxiv_Info-Infer.bbl}

\newpage

\appendix
\part*{Appendix}
\section{Proofs in Section~\ref{entropy approximation}}

We recall the definitions and notations used in this appendix. 
Let $q(w)$ be the Gaussian mixture distribution, that is,
\[
q(w)= \sum_{k=1}^{K}\pi_{k} \mathcal{N}(w\,|\,\mu_{k}, \Sigma_{k}), \quad w \in \mathbb{R}^m,
\]
where $m\in \mathbb N$ is the dimension, $K \in \mathbb{N}$ is the number of mixture components, $\pi_k \in (0,1]$ are mixing coefficients constrained by $\sum_{k=1}^{K}\pi_k=1$, and 
$\mathcal{N}(w\,|\, \mu_{k}, \Sigma_{k})$ is the Gaussian distribution with a mean $\mu_k \in \mathbb{R}^{m}$ and covariance matrix $\Sigma_k \in \mathbb{R}^{m \times m}$, that is,
\[
\mathcal{N}(w\,|\,\mu_{k}, \Sigma_{k}) = \frac{1}{\sqrt{(2\pi)^{m} |\Sigma_{k}|}}\exp\left(-\frac{1}{2}\left\|w-\mu_k\right\|_{\Sigma_k}^{2}\right).
\]
Here, $|\Sigma_{k}|$ is the determinant of matrix $\Sigma_{k}$, and $\|x\|_{\Sigma}^{2}\coloneqq x \cdot( \Sigma^{-1}x)$ for a vector $x \in \mathbb{R}^{m}$ and a positive definite matrix $\Sigma \in \mathbb{R}^{m \times m}$.
The entropy of $q(w)$ and its approximation are defined by 
\begin{align*}
H[q] \coloneqq& -\int q(w) \log( q(w))\,dw,\\
\widetilde{H}[q]
\coloneqq&\, -\sum_{k=1}^{K}\pi_{k} \int \mathcal{N}(w\,|\,\mu_k, \Sigma_{k}) \log\left(\pi_{k} \mathcal{N}(w\,|\,\mu_k, \Sigma_{k}) \right)dw \\
= & \, \frac{m}{2} +  \frac{m}{2} \log 2 \pi + \frac{1}{2} \sum_{k=1}^{K} \pi_{k} \log |\Sigma_{k}| 
- \sum_{k=1}^{K} \pi_{k} \log \pi_{k}. 
\end{align*}

\subsection{Proof of Theorem~\ref{main_result_1} }\label{Proof of Theorem 4.2}

Theorem~\ref{main_result_1} is a combination of Lemmas~\ref{part2}, \ref{part2.1}, and \ref{part2.5} stated below.

\begin{lem}\label{part1}
\[
\left|H[q] - \widetilde{H}[q] \right| \leq \frac{K}{2}.
\]
\end{lem}
\begin{proof}

Making the change of variables as $y= \Sigma_{k}^{-1/2}(x-\mu_{k})$, we write
\begin{align}
&\widetilde{H}[q]-H[q] \notag\\
& =
\sum_{k=1}^{K}\pi_{k} \int_{\mathbb{R}^{m}}  \mathcal{N}(x| \mu_{k}, \Sigma_{k}) \left\{\log \left(\sum_{k^{\prime}=1}^{K}\pi_{k^{\prime}} \mathcal{N}(x| \mu_{k^{\prime}}, \Sigma_{k}) \right) - \log \left(\pi_{k} \mathcal{N}(x| \mu_{k}, \Sigma_{k}) \right)  \right\}dx \hspace{30pt}\notag \\
&=\sum_{k=1}^{K}\pi_{k} \int_{\mathbb{R}^{m}}\frac{1}{\sqrt{(2\pi )^{m}|\Sigma_{k}|} } \exp\left(-\frac{\left\|x-\mu_{k}\right\|^{2}_{\Sigma_k}}{2}\right) \notag\\
&\hspace{60pt} \times \log \left(1+\sum_{k^{\prime}\neq k}\frac{\pi_{k^{\prime}} |\Sigma_k|^{\frac{1}{2}}}{\pi_{k}|\Sigma_{k^{\prime}}|^{\frac{1}{2}}} \exp\left(\frac{\left\|x-\mu_{k}\right\|^{2}_{\Sigma_k}-\left\|x-\mu_{k^{\prime}}\right\|^{2}_{\Sigma_{k^{\prime}}}}{2}\right) \right)dx \notag
\end{align}
\begin{align}
& =
\sum_{k=1}^{K}\pi_{k} \int_{\mathbb{R}^{m}}\frac{1}{\sqrt{(2\pi )^{m}}} \exp\left(-\frac{|y|^{2}}{2}\right)\notag\\
&\hspace{60pt} \times
\log \left(1+\sum_{k^{\prime}\neq k}\frac{\pi_{k^{\prime}} |\Sigma_k|^{\frac{1}{2}}}{\pi_{k}|\Sigma_{k^{\prime}}|^{\frac{1}{2}}} \exp\left(\frac{|y|^{2}-\left\|\Sigma_{k}^{\frac{1}{2}}\left(y-\Sigma_{k}^{-\frac{1}{2}}(\mu_{k^{\prime}}-\mu_{k})\right)\right\|^{2}_{\Sigma_{k^{\prime}}}}{2}\right) \right)dy.\\
\label{computation1}
\end{align}
Using the inequality $\log(1+ x) \leq \sqrt{x}$ $(x \geq 0)$ and the Cauchy-Schwarz inequality, we have
\begin{align} 
&\left|H[q] - \widetilde{H}[q] \right| \notag\\
& \leq
\sum_{k=1}^{K}\pi_{k} \int_{\mathbb{R}^{m}}\frac{1}{\sqrt{(2\pi )^{m}}} \exp\left(-\frac{|y|^{2}}{2}\right)
\sqrt{ \sum_{k^{\prime}\neq k}\frac{\pi_{k^{\prime}} |\Sigma_k|^{\frac{1}{2}}}{\pi_{k}|\Sigma_{k^{\prime}}|^{\frac{1}{2}}} \exp\left(\frac{|y|^{2}-\left\|\Sigma_{k}^{\frac{1}{2}}\left(y-\Sigma_{k}^{-\frac{1}{2}}(\mu_{k^{\prime}}-\mu_{k})\right)\right\|^{2}_{\Sigma_{k^{\prime}}}}{2} \right)}\ dy 
\label{computation2} \hspace{65pt}
\\
& =
\sum_{k=1}^{K} \int_{\mathbb{R}^{m}}\frac{1}{(2\pi)^{\frac{m}{4}}} \exp\left(-\frac{|y|^{2}}{4}\right)
\sqrt{ \frac{1}{(2\pi)^{\frac{m}{2}}} \sum_{k^{\prime}\neq k}\pi_{k}\pi_{k^{\prime}}\frac{ |\Sigma_k|^{\frac{1}{2}}}{|\Sigma_{k^{\prime}}|^{\frac{1}{2}}} \exp\left(\frac{-\left\|\Sigma_{k}^{\frac{1}{2}}\left(y-\Sigma_{k}^{-\frac{1}{2}}(\mu_{k^{\prime}}-\mu_{k})\right)\right\|^{2}_{\Sigma_{k^{\prime}}}}{2} \right)}\ dy \notag
& \leq
\sum_{k=1}^{K} \Biggl(\int_{\mathbb{R}^{m}}\frac{1}{(2\pi)^{\frac{m}{2}}} \exp\left(-\frac{|y|^{2}}{2}\right) dy \Biggr)^{\frac{1}{2}}\notag\\
&\hspace{60pt} \times
\underbrace{
\left(\int_{\mathbb{R}^{m}} \frac{1}{(2\pi)^{\frac{m}{2}}} \sum_{k^{\prime}\neq k}\pi_{k}\pi_{k^{\prime}}\frac{ |\Sigma_k|^{\frac{1}{2}}}{|\Sigma_{k^{\prime}}|^{\frac{1}{2}}} \exp\left(\frac{-\left\|\Sigma_{k}^{\frac{1}{2}}\left(y-\Sigma_{k}^{-\frac{1}{2}}(\mu_{k^{\prime}}-\mu_{k})\right)\right\|^{2}_{\Sigma_{k^{\prime}}}}{2} \right)dy \right)^{\frac{1}{2}}
}_
{\displaystyle=\left(\left(\sum_{k^{\prime}\neq k}\pi_{k}\pi_{k^{\prime}}\right) \int_{\mathbb{R}^{m}}\frac{1}{(2\pi)^{\frac{m}{2}}}\exp\left(-\frac{|z|^{2}}{2}\right)dz\right)^{\frac{1}{2}}} \notag\\
&=\sum_{k=1}^{K}\left(\sum_{k^{\prime}\neq k}\pi_{k}\pi_{k^{\prime}}\right)^{\frac{1}{2}}
=\sum_{k=1}^{K}\sqrt{\pi_k(1-\pi_k)} \leq \sum_{k=1}^{K}\frac{\pi_k + (1-\pi_k)}{2} \leq \frac{K}{2}. 
\end{align}
Thus, the proof of Lemma \ref{part1} is finished.
\end{proof}
\newpage
\begin{lem}\label{part2}
Let $s \in (0,1)$. Then
\begin{equation}
\left|H[q] - \widetilde{H}[q] \right| \leq \frac{2}{(1-s)^{\frac{m}{4}}}\sum_{k=1}^{K}\sum_{k^{\prime}\neq k} \sqrt{\pi_k\pi_{k^{\prime}}} \exp\left(-\frac{s\alpha_{k,k^{\prime}}^{2}}{4} \right). \label{estimate2}
\end{equation}
\end{lem}
\begin{proof}
Using the inequality \eqref{computation2} and the inequality $\sqrt{\sum_{i}a_i} \leq \sum_{i} \sqrt{a_i}$, we decompose
\begin{equation*}
\begin{split}
&\left|H[q] - \widetilde{H}[q] \right| \\
&\leq
\sum_{k=1}^{K}
\sum_{k^{\prime}\neq k}\sqrt{\pi_k\pi_{k^{\prime}}
\frac{ |\Sigma_k|^{\frac{1}{2}}}{|\Sigma_{k^{\prime}}|^{\frac{1}{2}}}}
\int_{\mathbb{R}^{m}}\frac{1}{\sqrt{(2\pi )^{m}}} \exp\left(-\frac{|y|^{2}}{4}- \frac{\left\|\Sigma_{k}^{\frac{1}{2}}\left(y-\Sigma_{k}^{-\frac{1}{2}}(\mu_{k^{\prime}}-\mu_{k})\right)\right\|^{2}_{\Sigma_{k^{\prime}}}}{4} \right) dy\\
& =
\sum_{k=1}^{K}
\sum_{k^{\prime}\neq k}\sqrt{\pi_k\pi_{k^{\prime}}
\frac{ |\Sigma_k|^{\frac{1}{2}}}{|\Sigma_{k^{\prime}}|^{\frac{1}{2}}}}
\int_{|y|<\alpha_{k,k^{\prime}}}\frac{1}{\sqrt{(2\pi )^{m}}} \exp\left(-\frac{|y|^{2}}{4}- \frac{\left\|\Sigma_{k}^{\frac{1}{2}}\left(y-\Sigma_{k}^{-\frac{1}{2}}(\mu_{k^{\prime}}-\mu_{k})\right)\right\|^{2}_{\Sigma_{k^{\prime}}}}{4} \right) dy \\
& \quad +
\sum_{k=1}^{K}
\sum_{k^{\prime}\neq k}\sqrt{\pi_k\pi_{k^{\prime}}
\frac{ |\Sigma_k|^{\frac{1}{2}}}{|\Sigma_{k^{\prime}}|^{\frac{1}{2}}}}
\int_{|y|>\alpha_{k,k^{\prime}}}\frac{1}{\sqrt{(2\pi )^{m}}} \exp\left(-\frac{|y|^{2}}{4}- \frac{\left\|\Sigma_{k}^{\frac{1}{2}}\left(y-\Sigma_{k}^{-\frac{1}{2}}(\mu_{k^{\prime}}-\mu_{k})\right)\right\|^{2}_{\Sigma_{k^{\prime}}}}{4} \right) dy \\
& \eqqcolon D^{i} + D^{o}. 
\end{split}
\end{equation*}
Firstly, we evaluate the term $D^{i}$.
By the definition of $\alpha_{k,k'}$, we have
\begin{equation*}
    \left|\Sigma_{k}^{-\frac{1}{2}}(\mu_{k^{\prime}}-\mu_{k})\right|=\alpha_{k,k'}\left(1+\left\|\Sigma_{k}^{-\frac{1}{2}}\Sigma_{k'}^{\frac{1}{2}}\right\|_{\rm op}\right)>|y|+\alpha_{k,k'}\left\|\Sigma_{k}^{-\frac{1}{2}}\Sigma_{k'}^{\frac{1}{2}}\right\|_{\rm op}.
\end{equation*}
Then it follows from properties of $\|\cdot\|_{\Sigma_{k'}}, \|\cdot\|_{\rm op}$, and triangle inequality that
\begin{align} \label{Di_estimate}
    \left\|\Sigma_{k}^{\frac{1}{2}}\left(y-\Sigma_{k}^{-\frac{1}{2}}(\mu_{k^{\prime}}-\mu_{k})\right)\right\|_{\Sigma_{k^{\prime}}} 
    &=\left|\Sigma_{k'}^{-\frac{1}{2}}\Sigma_k^{\frac{1}{2}}\left(y-\Sigma_{k}^{-\frac{1}{2}}(\mu_{k^{\prime}}-\mu_{k})\right)\right|
    \notag \\
    &\geq \frac{\left|y-\Sigma_{k}^{-\frac{1}{2}}(\mu_{k^{\prime}}-\mu_{k})\right|}{\left\|\left(\Sigma_{k'}^{-\frac{1}{2}}\Sigma_{k}^{\frac{1}{2}}\right)^{-1}\right\|_{\rm op}}
    \geq \frac{\left|\Sigma_{k}^{-\frac{1}{2}}(\mu_{k^{\prime}}-\mu_{k})\right|-|y|}{\left\|\Sigma_{k}^{-\frac{1}{2}}\Sigma_{k'}^{\frac{1}{2}}\right\|_{\rm op}}
    >\alpha_{k,k'}.
\end{align}
From inequality~\eqref{Di_estimate} and the Cauchy-Schwarz inequality, it follows that for $s \in (0,1)$,
\begin{align*}
 D^{i} &=
\sum_{k=1}^{K}
\sum_{k^{\prime}\neq k}\sqrt{\pi_k\pi_{k^{\prime}}
\frac{ |\Sigma_k|^{\frac{1}{2}}}{|\Sigma_{k^{\prime}}|^{\frac{1}{2}}}}
\int_{|y|<\alpha_{k,k^{\prime}}}\frac{1}{\sqrt{(2\pi )^{m}}}\\
&\qquad \times \hspace{0cm}
\exp\left(-\frac{|y|^{2}}{4}- \frac{s\left\|\Sigma_{k}^{\frac{1}{2}}\left(y-\Sigma_{k}^{-\frac{1}{2}}(\mu_{k^{\prime}}-\mu_{k})\right)\right\|^{2}_{\Sigma_{k^{\prime}}}}{4}-\frac{(1-s)\left\|\Sigma_{k}^{\frac{1}{2}}\left(y-\Sigma_{k}^{-\frac{1}{2}}(\mu_{k^{\prime}}-\mu_{k})\right)\right\|^{2}_{\Sigma_{k^{\prime}}}}{4} \right) dy \\
&\leq
\sum_{k=1}^{K}
\sum_{k^{\prime}\neq k} 
\sqrt{\pi_k\pi_{k^{\prime}}}
\exp\left(-\frac{s \alpha^{2}_{k,k^{\prime}}}{4}\right)
\int_{|y|<\alpha_{k,k^{\prime}}}\frac{1}{(2\pi )^{\frac{m}{4}}} \\
& \hspace{60pt} \times
\exp\left(-\frac{|y|^{2}}{4}\right)
\sqrt{\frac{ |\Sigma_k|^{\frac{1}{2}}}{|\Sigma_{k^{\prime}}|^{\frac{1}{2}}}}
\frac{1}{(2\pi )^{\frac{m}{4}}}
\exp\left(-\frac{(1-s)\left\|\Sigma_{k}^{\frac{1}{2}}\left(y-\Sigma_{k}^{-\frac{1}{2}}(\mu_{k^{\prime}}-\mu_{k})\right)\right\|^{2}_{\Sigma_{k^{\prime}}}}{4} \right) dy \hspace{25pt}\\
&\leq
\sum_{k=1}^{K}
\sum_{k^{\prime}\neq k} 
\sqrt{\pi_k\pi_{k^{\prime}}}
\exp\left(-\frac{s \alpha^{2}_{k,k^{\prime}}}{4}\right)
\left(\int_{\mathbb{R}^{m}}\frac{1}{(2\pi )^{\frac{m}{2}}}\exp\left(-\frac{|y|^{2}}{2}\right)dy \right)^{\frac{1}{2}} \\
& 
\hspace{60pt} \times
\left(\int_{\mathbb{R}^{m}}
\frac{ |\Sigma_k|^{\frac{1}{2}}}{|\Sigma_{k^{\prime}}|^{\frac{1}{2}}}
\frac{1}{(2\pi )^{\frac{m}{2}}}
\exp\left(-\frac{(1-s)\left\|\Sigma_{k}^{\frac{1}{2}}\left(y-\Sigma_{k}^{-\frac{1}{2}}(\mu_{k^{\prime}}-\mu_{k})\right)\right\|^{2}_{\Sigma_{k^{\prime}}}}{2} \right) dy \right)^{\frac{1}{2}}\\
&
=
\sum_{k=1}^{K}
\sum_{k^{\prime}\neq k}
\sqrt{\pi_k\pi_{k^{\prime}}}
\exp\left(-\frac{s \alpha^{2}_{k,k^{\prime}}}{4}\right) 
\left(\int_{\mathbb{R}^{m}}\frac{1}{(2\pi )^{\frac{m}{2}}}\exp\left(-\frac{(1-s)|z|^{2}}{2}\right)dz\right)^{\frac{1}{2}}\\
&=
\sum_{k=1}^{K}
\sum_{k^{\prime}\neq k}
\sqrt{\pi_k\pi_{k^{\prime}}}
\exp\left(-\frac{s \alpha^{2}_{k,k^{\prime}}}{4}\right)
(1-s)^{-\frac{m}{4}}\underbrace{\left(\int_{\mathbb{R}^{m}}\frac{1}{(2\pi )^{\frac{m}{2}}|(1-s)^{-1}I|^{\frac{1}{2}}}\exp\left(-\frac{1}{2}\|z\|_{(1-s)^{-1}I}^{2}\right)dz\right)^{\frac{1}{2}}}_{\displaystyle=1}\\
&=\frac{1}{(1-s)^{\frac{m}{4}}}
\sum_{k=1}^{K}
\sum_{k^{\prime}\neq k}
\sqrt{\pi_k\pi_{k^{\prime}}}
\exp\left(-\frac{s \alpha^{2}_{k,k^{\prime}}}{4}\right),
\label{D_int}
\end{align*}
where we have used the change of variable as $z = \Sigma_{k^{\prime}}^{-\frac{1}{2}}\Sigma_{k}^{\frac{1}{2}}\left(y-\Sigma_{k}^{-\frac{1}{2}}(\mu_{k^{\prime}}-\mu_{k})\right)$.

Secondly, we evaluate the term $D^{o}$. 
In the same way as above, we have
\begin{equation*}
\begin{split} 
D^{o} &=
\sum_{k=1}^{K}
\sum_{k^{\prime}\neq k}\sqrt{\pi_k\pi_{k^{\prime}}
\frac{ |\Sigma_k|^{\frac{1}{2}}}{|\Sigma_{k^{\prime}}|^{\frac{1}{2}}}}\\ 
& \hspace{20pt} \times\int_{|y|>\alpha_{k,k^{\prime}}}\frac{1}{\sqrt{(2\pi )^{m}}}
\exp\left(-\frac{s|y|^{2}}{4}-\frac{(1-s)|y|^{2}}{4}
- \frac{\left\|\Sigma_{k}^{\frac{1}{2}}\left(y-\Sigma_{k}^{-\frac{1}{2}}(\mu_{k^{\prime}}-\mu_{k})\right)\right\|^{2}_{\Sigma_{k^{\prime}}}}{4} \right) dy\\
&\leq
\sum_{k=1}^{K}
\sum_{k^{\prime}\neq k}
\sqrt{\pi_k\pi_{k^{\prime}}}
\exp\left(-\frac{s \alpha^{2}_{k,k^{\prime}}}{4}\right) 
\int_{|y|>\alpha_{k,k^{\prime}}}\frac{1}{(2\pi )^{\frac{m}{4}}}\exp\left(-\frac{(1-s)|y|^{2}}{4}\right) \\
& 
\quad \hspace{3cm} \times
\sqrt{\frac{ |\Sigma_k|^{\frac{1}{2}}}{|\Sigma_{k^{\prime}}|^{\frac{1}{2}}}}
\frac{1}{(2\pi )^{\frac{m}{4}}}
\exp\left(-\frac{\left\|\Sigma_{k}^{\frac{1}{2}}\left(y-\Sigma_{k}^{-\frac{1}{2}}(\mu_{k^{\prime}}-\mu_{k})\right)\right\|^{2}_{\Sigma_{k^{\prime}}}}{4} \right) dy \\
&\leq
\sum_{k=1}^{K}
\sum_{k^{\prime}\neq k} 
\sqrt{\pi_k\pi_{k^{\prime}}}
\exp\left(-\frac{s \alpha^{2}_{k,k^{\prime}}}{4}\right)
\left(\int_{\mathbb{R}^{m}}
\frac{1}{(2\pi )^{\frac{m}{2}}}\exp\left(-\frac{(1-s)|y|^{2}}{2}\right)dy \right)^{\frac{1}{2}} \\
& 
\quad \hspace{2cm} \times
\left(\int_{\mathbb{R}^{m}}
\frac{ |\Sigma_k|^{\frac{1}{2}}}{|\Sigma_{k^{\prime}}|^{\frac{1}{2}}}
\frac{1}{(2\pi )^{\frac{m}{2}}}
\exp\left(-\frac{\left\|\Sigma_{k}^{\frac{1}{2}}\left(y-\Sigma_{k}^{-\frac{1}{2}}(\mu_{k^{\prime}}-\mu_{k})\right)\right\|^{2}_{\Sigma_{k^{\prime}}}}{2} \right) dy \right)^{\frac{1}{2}}\\
&
=
\sum_{k=1}^{K}
\sum_{k^{\prime}\neq k}
\sqrt{\pi_k\pi_{k^{\prime}}}
\exp\left(-\frac{s \alpha^{2}_{k,k^{\prime}}}{4}\right) 
\left(\int_{\mathbb{R}^{m}}\frac{1}{(2\pi )^{\frac{m}{2}}}\exp\left(-\frac{(1-s)|y|^{2}}{2}\right)dy \right)^{\frac{1}{2}} \\
&
= 
\frac{1}{(1-s)^{\frac{m}{4}}}
\sum_{k=1}^{K}
\sum_{k^{\prime}\neq k}
\sqrt{\pi_k\pi_{k^{\prime}}}
\exp\left(-\frac{s \alpha^{2}_{k,k^{\prime}}}{4}\right).
\end{split}
\label{D_out}
\end{equation*}
Combining the estimates obtained now, we conclude \eqref{estimate2}.
\end{proof}
\begin{lem}
\label{part2.1}
Let $s \in (0,1)$. Then
\begin{equation*}
\left| H[q] - \widetilde{H}[q] \right|\leq \frac{2}{(1-s)^{\frac{m}{4}}}\sum_{k=1}^{K}\sum_{k^{\prime}\neq k} \sqrt{\pi_k\pi_{k^{\prime}}} \exp\left(-\frac{s\alpha_{\{k,k'\}}^{2}}{4} \right).
\end{equation*}
\end{lem}
\begin{proof}
The proof is almost the same as Theorem \ref{part2} except the evaluation \eqref{Di_estimate}.
By the change of variables $y=\Sigma_k^{-\frac{1}{2}}(x-\mu_k)$, we have
\begin{align*}
    \|x-\mu_k\|_{\Sigma_k}&=\left|\Sigma_k^{-\frac{1}{2}}(x-\mu_k)\right|=|y|, 
    \\
    \|x-\mu_{k'}\|_{\Sigma_{k'}}&=\left|\Sigma_{k'}^{-\frac{1}{2}}(x-\mu_{k'})\right|=\left|\Sigma_{k'}^{-\frac{1}{2}}\Sigma_k^{\frac{1}{2}}\left(y-\Sigma_k^{-\frac{1}{2}}(\mu_{k'}-\mu_k)\right)\right|=\left\|\Sigma_k^{\frac{1}{2}}\left(y-\Sigma_k^{-\frac{1}{2}}(\mu_{k'}-\mu_k)\right)\right\|_{\Sigma_{k'}}.
\end{align*}
From the definition of $\alpha_{\{k,k'\}}$,
\begin{equation*}
\{y \in \mathbb{R}^m:|y|<\alpha \} \cap 
\left\{
y \in \mathbb{R}^m:\left\|\Sigma_k^{\frac{1}{2}}\left(y-\Sigma_k^{-\frac{1}{2}}(\mu_{k'}-\mu_k)\right)\right\|_{\Sigma_{k'}} < \alpha
\right\} = \varnothing,
\end{equation*}
then if $|y|<\alpha_{\{k,k'\}}$, we obtain
$$\left\|\Sigma_k^{\frac{1}{2}}\left(y-\Sigma_k^{-\frac{1}{2}}(\mu_{k'}-\mu_k)\right)\right\|_{\Sigma_{k'}} \geq \alpha_{\{k,k'\}}.$$
The proof complete by replacing $\alpha_{k,k'}$ with $\alpha_{\{k,k'\}}$ in the proof of Lemma~\ref{part2}.
\end{proof}

\begin{lem} \label{remark_alpha}
$\alpha_{\{k,k'\}} \geq \alpha_{k,k'}$
for any $k,k' \in [K]$.
\end{lem}
\begin{proof}
    When $x$ satisfies
    \begin{equation*}
        \|x-\Sigma_k^{-\frac{1}{2}}\mu_{k'}\|_{\Sigma_k^{-1}\Sigma_{k'}}<\alpha_{k,k'},
    \end{equation*}
    because
    \begin{equation*}
    \|x-\Sigma_k^{-\frac{1}{2}}\mu_{k'}\|_{\Sigma_k^{-1}\Sigma_{k'}}=\left|\Sigma_{k'}^{-\frac{1}{2}}\Sigma_k^{\frac{1}{2}}\left(x-\Sigma_k^{-\frac{1}{2}}\mu_{k'}\right)\right| \geq \frac{|x-\Sigma_k^{-\frac{1}{2}}\mu_{k'}|}{\left\|\left(\Sigma_{k'}^{-\frac{1}{2}}\Sigma_k^{\frac{1}{2}}\right)^{-1}\right\|_{\bf op}}=\frac{|x-\Sigma_k^{-\frac{1}{2}}\mu_{k'}|}{\sigma},
    \end{equation*}
    then $|x-\Sigma_k^{-\frac{1}{2}}\mu_{k'}| < \alpha_{k,k'}\sigma$, where $\sigma=\|\Sigma_k^{-\frac{1}{2}}\Sigma_{k'}^{\frac{1}{2}}\|_{\rm op}$.
    On the other hand, from the definition of $\alpha_{k,k'}$, we have
    \begin{equation*}
        \alpha_{k,k'}+\alpha_{k,k'}\sigma=|\Sigma_k^{-\frac{1}{2}}\mu_k-\Sigma_k^{\frac{1}{2}}\mu_{k'}|,
    \end{equation*}
    and thus $\{x \in \mathbb{R}^m:|x-\Sigma_k^{-\frac{1}{2}}\mu_k|<\alpha_{k,k'}\} \cap \{x \in \mathbb{R}^m:|x-\Sigma_k^{-\frac{1}{2}}\mu_{k'}|<\alpha_{k,k'}\sigma\}=\varnothing$.
    Therefore, we obtain
    \begin{equation*}
    \{x \in \mathbb{R}^m:|x-\Sigma_k^{-\frac{1}{2}}\mu_k|<\alpha_{k,k'}\} \cap \{x \in \mathbb{R}^m:\|x-\Sigma_k^{-\frac{1}{2}}\mu_{k'}\|_{\Sigma_k^{-1}\Sigma_{k'}}<\alpha_{k,k'}\}=\varnothing.
    \end{equation*}
    Making the change of variables as $y=\Sigma_k^{\frac{1}{2}}x$,
    \begin{equation*}
    \{y \in \mathbb{R}^m:\|y-\mu_{k'}\|_{\Sigma_k}<\alpha_{k,k'}\} \cap \{y \in \mathbb{R}^m:\|y-\mu_{k'}\|_{\Sigma_{k'}}<\alpha_{k,k'}\}=\varnothing.
    \end{equation*}
    From definition~\eqref{alpha_(k)}, $\alpha_{\{k,k'\}}\geq\alpha_{k,k'}$ is obtained.
\end{proof}

\begin{lem}\label{part2.5}
\[
\begin{split}
&\sum_{k=1}^{K}\sum_{k^{\prime}\neq k}
\frac{\pi_k\pi_{k^{\prime}}}{1-\pi_k}
c_{k,k^{\prime}}\log
\left(
1+\frac{1-\pi_k}{\pi_k}
\frac{|\Sigma_{k}|^{\frac{1}{2}}}{\displaystyle \max_{l}|\Sigma_{l}|^{\frac{1}{2}}}
 \exp
\left(
-\frac{
\Bigl(1+\|\Sigma_{k^{\prime}}^{-\frac{1}{2}}\Sigma_{k}^{\frac{1}{2}} \|_{\rm op}
\Bigr)^2
}{2}
\alpha_{k^{\prime},k}^{2}
\right)
\right) 
\leq 
\left|H[q] - \widetilde{H}[q] \right|,
\end{split}
\]
where the coefficient $c_{k,k^{\prime}}$ is defined by
\[
c_{k,k^{\prime}}\coloneqq\frac{1}{\sqrt{(2\pi)^m}}\int_{\mathbb{R}^{m}_{k,k^{\prime}}}\exp\left(-\frac{|y|^2}{2}\right)dy \geq 0,
\]
and the set $\mathbb{R}^{m}_{k,k^{\prime}}$ is defined by
\begin{equation}\label{app:definition}
\mathbb{R}^{m}_{k,k^{\prime}}
\coloneqq\left\{ y \in \mathbb{R}^m : 
\begin{array}{cc}
y\cdot y \geq (\Sigma_{k}^{\frac{1}{2}}\Sigma_{k^{\prime}}^{-1}\Sigma_{k}^{\frac{1}{2}}y)\cdot y,
\\
y\cdot (\Sigma_{k}^{\frac{1}{2}}\Sigma_{k^{\prime}}^{-1} (\mu_{k^{\prime}} -\mu_k) ) \geq 0
\end{array}
\right\}.
\end{equation}
\end{lem}
\begin{proof}
Using the equality in \eqref{computation1}, we write
\[
\begin{split}
&\left|H[q] - \widetilde{H}[q] \right|\\
& =
\sum_{k=1}^{K}\pi_{k} \int_{\mathbb{R}^{m}}\frac{1}{\sqrt{(2\pi )^{m}}} \exp\left(-\frac{|y|^{2}}{2}\right) \\
&\hspace{40pt} \times
\log \left(1+\sum_{k^{\prime}\neq k}\frac{\pi_{k^{\prime}} |\Sigma_k|^{\frac{1}{2}}}{\pi_{k}|\Sigma_{k^{\prime}}|^{\frac{1}{2}}} \exp\left(\frac{|y|^{2}-\left\|\Sigma_{k}^{\frac{1}{2}}\left(y-\Sigma_{k}^{-\frac{1}{2}}(\mu_{k^{\prime}}-\mu_{k})\right)\right\|^{2}_{\Sigma_{k^{\prime}}}}{2}\right) \right)dy
\\
&
\geq 
\sum_{k=1}^{K}\pi_{k} \int_{\mathbb{R}^{m}}\frac{1}{\sqrt{(2\pi )^{m}}} \exp\left(-\frac{|y|^{2}}{2}\right)
\\
&
\hspace{1cm} \times
\log \left(1+
\frac{1-\pi_k}{\pi_k}
\frac{|\Sigma_k|^{\frac{1}{2}}}{\displaystyle \max_l|\Sigma_{l}|^{\frac{1}{2}}}
\sum_{k^{\prime}\neq k}
\frac{\pi_{k^{\prime}}}{1-\pi_{k}}
\exp\left(\frac{|y|^{2}-\left\|\Sigma_{k}^{\frac{1}{2}}\left(y-\Sigma_{k}^{-\frac{1}{2}}(\mu_{k^{\prime}}-\mu_{k})\right)\right\|^{2}_{\Sigma_{k^{\prime}}}}{2}\right) \right)dy.
\end{split}
\]
Since $\log(1+\lambda x)$ is a concave function of $x>0$ for $\lambda>0$, we estimate that
\begin{align*}
\left|H[q] - \widetilde{H}[q] \right|
&
\geq 
\sum_{k=1}^{K}\pi_{k} \int_{\mathbb{R}^{m}}\frac{1}{\sqrt{(2\pi )^{m}}} \exp\left(-\frac{|y|^{2}}{2}\right)
\sum_{k^{\prime}\neq k}
\frac{\pi_{k^{\prime}}}{1-\pi_{k}}
\\
&
\hspace{1cm} \times
\log \left(1+
\frac{1-\pi_k}{\pi_k}
\frac{|\Sigma_k|^{\frac{1}{2}}}{\displaystyle \max_l|\Sigma_{l}|^{\frac{1}{2}}}
\exp\left(\frac{|y|^{2}-\left\|\Sigma_{k}^{\frac{1}{2}}\left(y-\Sigma_{k}^{-\frac{1}{2}}(\mu_{k^{\prime}}-\mu_{k})\right)\right\|^{2}_{\Sigma_{k^{\prime}}}}{2}\right) \right)dy\\
\hspace{30pt}&\geq
\sum_{k=1}^{K}\pi_{k} \int_{\mathbb{R}^{m}_{k,k^{\prime}}}\frac{1}{\sqrt{(2\pi )^{m}}} \exp\left(-\frac{|y|^{2}}{2}\right)
\sum_{k^{\prime}\neq k}
\frac{\pi_{k^{\prime}}}{1-\pi_{k}}
\\
&
\hspace{1cm} \times
\log \left(1+
\frac{1-\pi_k}{\pi_k}
\frac{|\Sigma_k|^{\frac{1}{2}}}{\displaystyle \max_l|\Sigma_{l}|^{\frac{1}{2}}}
\exp\left(\frac{|y|^{2}-\left\|\Sigma_{k}^{\frac{1}{2}}\left(y-\Sigma_{k}^{-\frac{1}{2}}(\mu_{k^{\prime}}-\mu_{k})\right)\right\|^{2}_{\Sigma_{k^{\prime}}}}{2}\right) \right)dy.
\label{lowerapp1}
\end{align*}
Here, it follows from the two conditions in the definition \eqref{app:definition} of $\mathbb{R}^{m}_{k,k^{\prime}}$ that 
\begin{equation*}
\label{est:alpha_k}
\begin{split}
 |y|^{2}-\left\|\Sigma_{k}^{\frac{1}{2}}\left(y-\Sigma_{k}^{-\frac{1}{2}}(\mu_{k^{\prime}}-\mu_{k})\right)\right\|^{2}_{\Sigma_{k^{\prime}}}
&\geq
\left|\Sigma_{k^{\prime}}^{-\frac{1}{2}}\Sigma_{k}^{\frac{1}{2}}y\right|^{2}
-
\left|\Sigma_{k^{\prime}}^{-\frac{1}{2}}\Sigma_{k}^{\frac{1}{2}}y-\Sigma_{k^{\prime}}^{-\frac{1}{2}}(\mu_{k^{\prime}}-\mu_{k})\right|^{2}
\\
&
= -\left|\Sigma_{k^{\prime}}^{-\frac{1}{2}}(\mu_{k^{\prime}}-\mu_{k})\right|^{2}
+2 y \cdot \left(\Sigma_{k}^{\frac{1}{2}}\Sigma_{k^{\prime}}^{-1}(\mu_{k^{\prime}}-\mu_{k}) \right) 
\\
&
\geq 
-\left|\Sigma_{k^{\prime}}^{-\frac{1}{2}}(\mu_{k^{\prime}}-\mu_{k})\right|^{2} \\
&= - \left(1+\|\Sigma_{k^{\prime}}^{-\frac{1}{2}}\Sigma_{k}^{\frac{1}{2}} \|_{\rm op}\right)^{2}\alpha_{k^{\prime},k}^{2},
\end{split}
\end{equation*}
for $y \in \mathbb{R}^{m}_{k,k^{\prime}}$, 
where we used the cosine formula in the second step. 
Combining the above estimates, we have
\begin{align*}
\left|H[q] - \widetilde{H}[q] \right|
&\geq 
\sum_{k=1}^{K}\pi_{k} \int_{\mathbb{R}^{m}_{k,k^{\prime}}}\frac{1}{\sqrt{(2\pi )^{m}}} \exp\left(-\frac{|y|^{2}}{2}\right)
\sum_{k^{\prime}\neq k}
\frac{\pi_{k^{\prime}}}{1-\pi_{k}} \\
&\hspace{60pt} \times
\log \left(1+
\frac{1-\pi_k}{\pi_k}
\frac{|\Sigma_k|^{\frac{1}{2}}}{\displaystyle\max_l|\Sigma_{l}|^{\frac{1}{2}}}
\exp
\left(
-
\frac{
\left(1+\|\Sigma_{k^{\prime}}^{-\frac{1}{2}}
\Sigma_{k}^{\frac{1}{2}} \|_{\rm op}\right)^{2}
}{2}
\alpha_{k^{\prime},k}^{2}
\right)
\right)dy\\
&=\sum_{k=1}^{K}\sum_{k^{\prime}\neq k}
\frac{\pi_k\pi_{k^{\prime}}}{1-\pi_k}
c_{k,k^{\prime}}\log
\left(
1+\frac{1-\pi_k}{\pi_k}
\frac{|\Sigma_{k}|^{\frac{1}{2}}}{\displaystyle \max_{l}|\Sigma_{l}|^{\frac{1}{2}}}
 \exp
\left(
-\frac{
\Bigl(1+\|\Sigma_{k^{\prime}}^{-\frac{1}{2}}\Sigma_{k}^{\frac{1}{2}} \|_{\rm op}
\Bigr)^2
}{2}
\alpha_{k^{\prime},k}^{2}
\right)
\right).
\end{align*}
The proof of Lemma \ref{part2.5} is finished.
\end{proof}
\begin{rem}\label{remarkA}
Either $c_{k, k^{\prime}}$ or $c_{k^{\prime},k}$ is positive.
Indeed,
\begin{itemize}

\item 
if $\Sigma_{k}^{-1} -\Sigma_{k^{\prime}}^{-1}$ has at least one positive eigenvalue, then $c_{k,k^{\prime}}$ is positive;

\item
if all eigenvalues of $\Sigma_{k}^{-1} -\Sigma_{k^{\prime}}^{-1} \neq O$ are non-positive, then $\Sigma_{k^{\prime}}^{-1}-\Sigma_{k}^{-1}$ has at least one positive eigenvalue;

\item
if $\Sigma_{k}^{-1}-\Sigma_{k^{\prime}}^{-1}=O$, then $c_{k, k^{\prime}}=1/2$ because $\mathbb{R}^{m}_{k, k^{\prime}}$ is a half-space of $\mathbb{R}^{m}$,

\end{itemize}
where $O$ is the zero matrix.
\end{rem}

\subsection{Proof of Corollary \ref{prob_ineq_cor}}\label{Proof of Corollary4.3}
We restate Corollary~\ref{prob_ineq_cor} as follows:
\begin{lem}
Let $c>0$.
Assume 
$\{\mu_k\}_k$ and $\{\Sigma_{k}\}_k$ such that 
\begin{equation}
\frac{\Sigma_{k}^{-\frac{1}{2}}(\mu_k-\mu_{k^{\prime}})}{1+\|\Sigma_k^{-\frac{1}{2}}\Sigma_{k^{\prime}}^{\frac{1}{2}} \|_{\rm op}} \sim \mathcal{N}(0, c^{2}I)
\label{random_mu_k_app}
\end{equation}
for all pairs $k, k^{\prime} \in [K]$ ($k\neq k^{\prime}$).
Then, for $\varepsilon>0$ and $s \in (0,1)$,
\begin{equation}
\begin{split}
&P\left(\left|H[q] - \widetilde{H}[q] \right| \geq \varepsilon \right) \leq \frac{2(K-1)}{\varepsilon}\left(\sqrt{1-s} \left(1+\frac{s c^2}{2}\right) \right)^{-\frac{m}{2}}.
\end{split}
\label{prob_ineq_gen_app}
\end{equation}
\end{lem}
\begin{proof}
Using Lemma \ref{part2} and Markov's inequality, for $\varepsilon>0$ and $s \in (0,1)$, we estimate
\[
P\left(\left|H[q] - \widetilde{H}[q] \right| \geq \varepsilon \right) \leq \frac{E\left[\left|H[q] - \widetilde{H}[q] \right|\right]}{\varepsilon} \leq \frac{2}{\varepsilon(1-s)^{\frac{m}{4}}}
\sum_{k=1}^{K}
\sum_{k^{\prime}\neq k}
\sqrt{\pi_k\pi_{k^{\prime}}}
E \left[ \exp\left(-\frac{s \alpha^{2}_{k,k^{\prime}}}{4}\right) \right].
\]
By the assumption \eqref{random_mu_k_app}, 
$\alpha^{2}_{k,k^{\prime}}/c^{2}$ follows the $\chi^{2}$-distribution with $m$ degrees of freedom, that is,
\[
\frac{1}{c^{2}} \left|\frac{\Sigma_{k}^{-\frac{1}{2}}(\mu_k-\mu_{k^{\prime}})}{1+\|\Sigma_k^{-\frac{1}{2}}\Sigma_{k^{\prime}}^{\frac{1}{2}} \|_{\rm op}} \right|^{2} \sim \chi_{m}^{2}.
\]
Therefore, we conclude from the moment-generating function for $\chi^{2}$-distribution that
\begin{align}
P\left(\left|H[q] - \widetilde{H}[q] \right| \geq \varepsilon \right)
&\leq 
\frac{2}{\varepsilon(1-s)^{\frac{m}{4}}}
\sum_{k=1}^{K}
\sum_{k^{\prime}\neq k}
\sqrt{\pi_k\pi_{k^{\prime}}}
E \left[ \exp\left(- \frac{sc^2}{4}
\frac{\alpha^{2}_{k,k^{\prime}}}{c^2}\right) \right] \nonumber
\\
&=
\frac{2}{\varepsilon(1-s)^{\frac{m}{4}}}
\sum_{k=1}^{K}
\sum_{k^{\prime}\neq k}
\sqrt{\pi_k\pi_{k^{\prime}}}
\left(1-2\left(- \frac{sc^2}{4}\right) \right)^{-\frac{m}{2}}\nonumber
\\
&\leq \frac{2(K-1)}{\varepsilon}\left( \sqrt{1-s} \left(1+\frac{s c^2}{2} \right) \right)^{-\frac{m}{2}}.\nonumber
\end{align}
\hspace{350pt}
\end{proof}

\subsection{Proof of Theorem~\ref{derivative-entropy}}\label{sec-proof of derivative entropy}
The next lemma is the same as Theorem~\ref{derivative-entropy}.
\begin{lem}
Let $k \in [K]$, $p,q \in [m]$, and $s \in (0,1)$. Then
\begin{align*}
\mathrm{(i)} 
&
\quad
\left| \frac{\partial}{\partial \mu_{k,p}} \left(H[q] - \widetilde{H}[q]\right) \right|
\leq
\frac{2}{(1-s)^{\frac{m+2}{4}}}
\sum_{k^{\prime}\neq k}
\sqrt{\pi_k \pi_{k^{\prime}}}
\left( \left\| \Gamma_{k^{\prime}}^{-1} \right\|_{1}+
\left\| \Gamma_{k}^{-1} \right\|_{1} \right)
\exp
\left(
-\frac{s\alpha_{k,k'}^{2}}{4}
\right),
\\
\hspace{-0.5cm}
\mathrm{(ii)} 
&
\quad
\left| \frac{\partial}{\partial \gamma_{k,pq}} \left(H[q] - \widetilde{H}[q]\right) \right| 
\notag \\
& \quad \leq
\frac{6}{(1-s)^{\frac{m+4}{4}}}
\sum_{k^{\prime}\neq k}
\sqrt{\pi_k \pi_{k^{\prime}}}
\left(
2|\Gamma_k|^{-1}|\Gamma_{k, pq}|
+
\left\| \Gamma_{k}^{-1} \right\|_{1}
+
\left\| \Gamma_{k'}^{-1} \right\|_{1}
\right)
\exp
\left(
-\frac{s\alpha_{k,k'}^{2}}{4}
\right)
\\ &\quad \text{for} \ \gamma_{k,pq}\in \mathbb{R} \ \text{satisfying} \ \|\Gamma_k^{-1}\|_1<\infty,\\
\mathrm{(iii)} 
&
\quad
\left| \frac{\partial}{\partial \pi_{k}} \left(H[q] - \widetilde{H}[q]\right) \right|
\leq
\frac{8}{(1-s)^{\frac{m}{4}}}
\sum_{k^{\prime}\neq k}
\sqrt{\frac{\pi_{k^{\prime}}}{\pi_k}}
\exp
\left(
-\frac{s\alpha_{k,k'}^{2}}{4}
\right),
\end{align*}
where $\mu_{k,p}$ and $\gamma_{k,pq}$ is the $p$-th and $(p,q)$-th components of vector $\mu_k$ and matrix $\Gamma_{k}$, respectively, and $\left\| \cdot \right\|_{1}$ is the entry-wise matrix $1$-norm, and $|\Gamma_{k,pq}|$ is the determinant of the $(m-1)\times(m-1)$ matrix that results from deleting $p$-th row and $q$-th column of matrix $\Gamma_{k}$.
Moreover, the same upper bounds hold for $\alpha_{\{k,k'\}}$ instead of $\alpha_{k,k'}$.
\end{lem}

\begin{proof}
In this proof, we denote $\Gamma_{k^{\prime}}^{-1}\Gamma_{k}\, y-\Gamma_{k^{\prime}}^{-1}(\mu_{k^{\prime}}-\mu_{k})$ by $\Theta(y\mid\mu_{k;k'},\Gamma_{k;k'})$.
From the equality \eqref{computation1}, we have
\begin{align}
&H[q] - \widetilde{H}[q] \notag\\
&=
\sum_{k=1}^{K}\pi_{k} \int_{\mathbb{R}^{m}}\frac{1}{\sqrt{(2\pi )^{m}}} \exp\left(-\frac{|y|^{2}}{2}\right)\
\log 
\left(
1+\sum_{k^{\prime}\neq k}\frac{\pi_{k^{\prime}} |\Gamma_k|}{\pi_{k}|\Gamma_{k^{\prime}}|} 
\exp\left(
\frac{|y|^{2}-|\Theta(y\mid\mu_{k;k'},\Gamma_{k;k'})|^{2}}{2}
\right)
\right)
dy \notag
\\
&
=
\pi_{k} \int_{\mathbb{R}^{m}}\frac{1}{\sqrt{(2\pi )^{m}}} \exp\left(-\frac{|y|^{2}}{2}\right)\
\log 
\left(
1+\sum_{k^{\prime}\neq k}\frac{\pi_{k^{\prime}} |\Gamma_k|}{\pi_{k}|\Gamma_{k^{\prime}}|} 
\exp\left(
\frac{|y|^{2}-|\Theta(y\mid\mu_{k;k'},\Gamma_{k;k'})|^{2}}{2}
\right)
\right)
dy \notag
\\
&
\hspace{20pt}+
\sum_{\ell \neq k}\pi_{\ell} \int_{\mathbb{R}^{m}}\frac{1}{\sqrt{(2\pi )^{m}}} \exp\left(-\frac{|y|^{2}}{2}\right)\notag\\
&\label{eq.A.3}\hspace{80pt}\times
\log 
\Biggl(
1+\hspace{-85pt}
\underbrace{
\sum_{k^{\prime} \neq \ell}
\frac{\pi_{k^{\prime}} |\Gamma_{\ell}|}{\pi_{\ell}|\Gamma_{k^{\prime}}|} 
\exp
\left(
\frac{|y|^{2}-|\Theta(y\mid\mu_{\ell;k'},\Gamma_{\ell;k'})|^{2}}{2}
\right)
}_
{\hspace{80pt}\displaystyle=\frac{\pi_{k} |\Gamma_{\ell}|}{\pi_{\ell}|\Gamma_{k}|} 
\exp
\left(
\frac{|y|^{2}-|\Theta(y\mid\mu_{\ell;k},\Gamma_{\ell;k})|^{2}}{2}
\right)
+\text{terms independent of $k$}}
\hspace{-85pt}\Biggr)
dy.
\end{align}
\noindent{\bf (i)}\ 
The derivatives of \eqref{eq.A.3} with respect to $\mu_{k,p}$ are calculated as
\begin{align}
&\frac{\partial}{\partial \mu_{k,p}} \left(H[q] - \widetilde{H}[q]\right) \notag\\
&
=
\pi_{k} \int_{\mathbb{R}^{m}}\frac{1}{\sqrt{(2\pi )^{m}}} \exp\left(-\frac{|y|^{2}}{2}\right)\\
&\hspace{40pt}\times
\frac{\displaystyle\sum_{k^{\prime}\neq k}\frac{\pi_{k^{\prime}} |\Gamma_k|}{\pi_{k}|\Gamma_{k^{\prime}}|} 
\exp\left(
\frac{|y|^{2}-|\Theta(y\mid\mu_{k;k'},\Gamma_{k;k'})|^{2}}{2}
\right)
\frac{\partial}{\partial \mu_{k,p}}
\left(  
\frac{-|\Theta(y\mid\mu_{k;k'},\Gamma_{k;k'})|^{2}}{2}
\right)
}
{\displaystyle1+\sum_{k^{\prime}\neq k}\frac{\pi_{k^{\prime}} |\Gamma_k|}{\pi_{k}|\Gamma_{k^{\prime}}|} 
\exp\left(
\frac{|y|^{2}-|\Theta(y\mid\mu_{k;k'},\Gamma_{k;k'})|^{2}}{2}
\right)}
dy \notag\\
&
\hspace{20pt}+
\sum_{\ell \neq k}\pi_{\ell} \int_{\mathbb{R}^{m}}\frac{1}{\sqrt{(2\pi )^{m}}} \exp\left(-\frac{|y|^{2}}{2}\right)\\
&\hspace{60pt}\times
\frac{\displaystyle\frac{\pi_{k} |\Gamma_{\ell}|}{\pi_{\ell}|\Gamma_{k}|} 
\exp
\left(
\frac{|y|^{2}-|\Theta(y\mid\mu_{\ell;k},\Gamma_{\ell;k})|^{2}}{2}
\right)
\frac{\partial}{\partial \mu_{k,p}}
\left(  
\frac{-|\Theta(y\mid\mu_{\ell;k},\Gamma_{\ell;k})|^{2}}{2}
\right)
}
{\displaystyle1+
\sum_{k^{\prime} \neq \ell}
\frac{\pi_{k^{\prime}} |\Gamma_{\ell}|}{\pi_{\ell}|\Gamma_{k^{\prime}}|}
\exp
\left(
\frac{|y|^{2}-|\Theta(y\mid\mu_{\ell;k'},\Gamma_{\ell;k'})|^{2}}{2}
\right)
}
dy.\\
\label{derivative-mu-1}
\end{align}

Since $\frac{x}{a+x} \leq x^{\frac{1}{2}}$ for $x>1$ when $a\geq 1$, we estimate
\begin{equation}
\frac{\displaystyle\frac{\pi_{k^{\prime}} |\Gamma_k|}{\pi_{k}|\Gamma_{k^{\prime}}|} 
\exp\left(
\frac{|y|^{2}-|\Theta(y\mid\mu_{k;k'},\Gamma_{k;k'})|^{2}}{2}
\right)
}
{\displaystyle1+\sum_{k^{\prime}\neq k}\frac{\pi_{k^{\prime}} |\Gamma_k|}{\pi_{k}|\Gamma_{k^{\prime}}|} 
\exp\left(
\frac{|y|^{2}-|\Theta(y\mid\mu_{k;k'},\Gamma_{k;k'})|^{2}}{2}
\right)}
\leq
\sqrt{\frac{\pi_{k^{\prime}} |\Gamma_k|}{\pi_{k}|\Gamma_{k^{\prime}}|}}
\exp\left(
\frac{|y|^{2}-|\Theta(y\mid\mu_{k;k'},\Gamma_{k;k'})|^{2}}{4}
\right),
\label{derivative-mu-2}\noeqref{derivative-mu-2}
\end{equation}
and
\begin{equation}
\frac{\displaystyle\frac{\pi_{k} |\Gamma_{\ell}|}{\pi_{\ell}|\Gamma_{k}|} 
\exp
\left(
\frac{|y|^{2}-|\Theta(y\mid\mu_{\ell;k},\Gamma_{\ell;k})|^{2}}{2}
\right)
}
{\displaystyle1+
\sum_{k^{\prime} \neq \ell}
\frac{\pi_{k^{\prime}} |\Gamma_{\ell}|}{\pi_{\ell}|\Gamma_{k^{\prime}}|}
\exp
\left(
\frac{|y|^{2}-|\Theta(y\mid\mu_{\ell;k'},\Gamma_{\ell;k'})|^{2}}{2}
\right)
}\leq
\sqrt{\frac{\pi_{k} |\Gamma_{\ell}|}{\pi_{\ell}|\Gamma_{k}|}}
\exp
\left(
\frac{|y|^{2}-|\Theta(y\mid\mu_{\ell;k},\Gamma_{\ell;k})|^{2}}{4}
\right).
\label{derivative-mu-3}\noeqref{derivative-mu-3}
\end{equation}

We also calculate
\begin{align}
\frac{\partial}{\partial \mu_{k,p}}
\left(  
\frac{-|\Theta(y\mid\mu_{k;k'},\Gamma_{k;k'})|^{2}}{2}
\right)
&=
\sum_{i=1}^{m}\left[\Theta(y\mid\mu_{k;k'},\Gamma_{k;k'})\right]_{i} \gamma_{k^{\prime},ip}^{-1},
\label{derivative-mu-4}\noeqref{derivative-mu-4}
\end{align}
and
\begin{align}
\frac{\partial}{\partial \mu_{k,p}}
\left(
\frac{-|\Theta(y\mid\mu_{\ell;k},\Gamma_{\ell;k})|^{2}}{2}
\right)
&=
\sum_{i=1}^{m}\left[\Theta(y\mid\mu_{\ell;k},\Gamma_{\ell;k})\right]_{i} \gamma_{k,ip}^{-1},
\label{derivative-mu-5}
\end{align}
where we denote by $[v]_{i}$ the $i$-th component of vector $v$, and $\gamma_{k^{\prime},ip}^{-1}$ and $\gamma_{k,ip}^{-1}$ the $(i,p)$-th component of matrix $\Gamma_{k^{\prime}}^{-1}$ and $\Gamma_{k}^{-1}$, respectively.

By using \eqref{derivative-mu-1}--\eqref{derivative-mu-5}, 
\begin{align*}
&
\left| \frac{\partial}{\partial \mu_{k,p}} \left(H[q] - \widetilde{H}[q]\right) \right|
\\
&
\leq
\sum_{k^{\prime}\neq k}
\sqrt{\pi_k \pi_{k^{\prime}}}
\sum_{i=1}^{m}
\left|\gamma_{k^{\prime},ip}^{-1}\right|
\underbrace{
\sqrt{\frac{ |\Gamma_k|}{|\Gamma_{k^{\prime}}|}}
\int_{\mathbb{R}^{m}}\frac{ \left[\Theta(y\mid\mu_{k;k'},\Gamma_{k;k'})\right]_{i} }{\sqrt{(2\pi)^{m}}} 
\exp\left(
\frac{-|y|^{2}-|\Theta(y\mid\mu_{k;k'},\Gamma_{k;k'})|^{2}}{4}
\right)
dy
}
_{\displaystyle\leq \frac{2}{(1-s)^{\frac{m+2}{4}}} \exp
\left(
-\frac{s\alpha_{k,k^{\prime}}^{2}}{4}
\right)} \hspace{30pt}
\\
&
\hspace{20pt}+
\sum_{\ell \neq k}\sqrt{\pi_{k}\pi_{\ell}}
\sum_{i=1}^{m}
\left|\gamma_{k,ip}^{-1}\right|
\underbrace{
\sqrt{\frac{ |\Gamma_{\ell}|}{|\Gamma_{k}|}} 
\int_{\mathbb{R}^{m}}\frac{\left[\Theta(y\mid\mu_{\ell;k},\Gamma_{\ell;k})\right]_{i}}{\sqrt{(2\pi )^{m}}} 
\exp
\left(
\frac{-|y|^{2}-|\Theta(y\mid\mu_{\ell;k},\Gamma_{\ell;k})|^{2}}{4}
\right)
dy}
_{\displaystyle\leq \frac{2}{(1-s)^{\frac{m+2}{4}}} 
\exp
\left(
-\frac{s\alpha_{\ell,k}^{2}}{4}
\right)}\\
&\leq
\frac{2}{(1-s)^{\frac{m+2}{4}}}
\sum_{k^{\prime}\neq k}
\sqrt{\pi_k \pi_{k^{\prime}}}
\Biggl\{
\underbrace{
\left(
\sum_{i=1}^{m}
\left|\gamma_{k^{\prime},ip}^{-1}\right|
\right)
}_{\displaystyle\leq \left\| \Gamma_{k^{\prime}}^{-1} \right\|_{1}}
\exp
\left(
-\frac{s\alpha_{k,k^{\prime}}^{2}}{4}
\right)
+
\underbrace{
\left(
\sum_{i=1}^{m}
\left|\gamma_{k,ip}^{-1}\right|
\right)
}_{\displaystyle\leq \left\| \Gamma_{k}^{-1} \right\|_{1}}
\exp
\left(
-\frac{s\alpha_{k^{\prime}, k}^{2}}{4}
\right)
\Biggr\},
\end{align*}
where the last inequality is given by the same arguments in the proof of Lemma~\ref{part2}.
Similar to Lemma \ref{part2.1}, this evaluation holds when $\alpha_{k,k'}$ and $\alpha_{k',k}$ are replaced with any $\alpha \leq \alpha_{\{k,k'\}}$, for example $\max(\alpha_{k,k'},\alpha_{k',k})$.

\vspace{3mm}

\noindent{\bf (ii)}\ 
The derivatives of \eqref{eq.A.3} with respect to $\gamma_{k,pq}$ are calculated as follows: 
\begin{align}
&\frac{\partial}{\partial \gamma_{k,pq}} \left(H[q] - \widetilde{H}[q]\right)\notag\\
&
=
\pi_{k} \int_{\mathbb{R}^{m}}\frac{1}{\sqrt{(2\pi )^{m}}} \exp\left(-\frac{|y|^{2}}{2}\right)
\frac{\displaystyle\sum_{k^{\prime}\neq k}\frac{\pi_{k^{\prime}} |\Gamma_k|}{\pi_{k}|\Gamma_{k^{\prime}}|} 
\exp\left(
\frac{|y|^{2}-|\Theta(y\mid\mu_{k;k'},\Gamma_{k;k'})|^{2}}{2}
\right)
}
{\displaystyle1+\sum_{k^{\prime}\neq k}\frac{\pi_{k^{\prime}} |\Gamma_k|}{\pi_{k}|\Gamma_{k^{\prime}}|} 
\exp\left(
\frac{|y|^{2}-|\Theta(y\mid\mu_{k;k'},\Gamma_{k;k'})|^{2}}{2}
\right)}\notag
\\
&
\hspace{60pt}
\times
\Biggl\{
\underbrace{
|\Gamma_k|^{-1}
\left(
\frac{\partial}{\partial \gamma_{k,pq}}
|\Gamma_k|
\right)
}_{\displaystyle=|\Gamma_k|^{-1}|\Gamma_{k, pq}|}
+
\frac{\partial}{\partial \gamma_{k,pq}}
\left(
\frac{-|\Theta(y\mid\mu_{k;k'},\Gamma_{k;k'})|^{2}}{2}
\right)
\Biggr\}
dy \notag
\\
&
\hspace{20pt}+
\sum_{\ell \neq k}\pi_{\ell} \int_{\mathbb{R}^{m}}\frac{1}{\sqrt{(2\pi )^{m}}} \exp\left(-\frac{|y|^{2}}{2}\right)
\frac{\displaystyle\frac{\pi_{k} |\Gamma_{\ell}|}{\pi_{\ell}|\Gamma_{k}|} 
\exp
\left(
\frac{|y|^{2}-|\Theta(y\mid\mu_{\ell;k},\Gamma_{\ell;k})|^{2}}{2}
\right)
}
{\displaystyle1+
\sum_{k^{\prime} \neq \ell}
\frac{\pi_{k^{\prime}} |\Gamma_{\ell}|}{\pi_{\ell}|\Gamma_{k^{\prime}}|}
\exp
\left(
\frac{|y|^{2}-|\Theta(y\mid\mu_{\ell;k},\Gamma_{\ell;k})|^{2}}{2}
\right)
}\notag
\\
&
\hspace{80pt}
\times
\Biggl\{
\underbrace{
|\Gamma_{k}|
\frac{\partial}{\partial \gamma_{k,pq}}
|\Gamma_{k}|^{-1}
}_{\displaystyle=|\Gamma_k|^{-1}|\Gamma_{k,pq}|}
+
\frac{\partial}{\partial \gamma_{k,pq}} 
\left(
\frac{-|\Theta(y\mid\mu_{\ell;k},\Gamma_{\ell;k})|^{2}}{2}
\right)
\Biggr\}
dy.\label{derivative-gamma-1}
\end{align}

We calculate
\begin{equation}
\frac{\partial}{\partial \gamma_{k,pq}}
\left(
\frac{-|\Theta(y\mid\mu_{k;k'},\Gamma_{k;k'})|^{2}}{2}
\right)
=
-\sum_{i=1}^{m}\left[\Theta(y\mid\mu_{k;k'},\Gamma_{k;k'})\right]_{i} \gamma_{k^{\prime},ip}^{-1}y_{q},
\label{derivative-gamma-2}\noeqref{derivative-gamma-2}
\end{equation}
and
\begin{equation}
\frac{\partial}{\partial \gamma_{k,pq}}
\left(
\frac{-|\Theta(y\mid\mu_{\ell;k},\Gamma_{\ell;k})|^{2}}{2}
\right)
=
-\sum_{i=1}^{m}\left[\Theta(y\mid\mu_{\ell;k},\Gamma_{\ell;k}) \right]_{i}
\left[
\left(
\frac{\partial}{\partial \gamma_{k,pq}}\Gamma_{k}^{-1}
\right)
\Gamma_{k}
\left(
\Theta(y\mid\mu_{\ell;k},\Gamma_{\ell;k})
\right)
\right]_{i},
\label{derivative-gamma-3}
\end{equation}
where we denote by $[v]_{i}$ the $i$-th component of vector $v$, and $\gamma_{k^{\prime},ip}^{-1}$ and $\gamma_{k,ip}^{-1}$ the $(i,p)$-th component of matrix $\Gamma_{k^{\prime}}^{-1}$ and $\Gamma_{k}^{-1}$, respectively, and $y_q$ is the $q$-th component of vector $y$, and $\frac{\partial}{\partial \gamma_{k,pq}}\Gamma_{k}^{-1}$ is component-wise derivative of matrix $\Gamma_{k}^{-1}$ with respect to $\gamma_{k,pq}$.
We also calculate
$
\left(
\frac{\partial}{\partial \gamma_{k,pq}}\Gamma_{k}^{-1}
\right)
\Gamma_{k}
=
\delta_{k, pq} \Gamma_{k}^{-1}$,
where $\delta_{k, pq}$ is the matrix such that $(p,q)$-th component is one, and other are zero.
We further estimate \eqref{derivative-gamma-3} by
\begin{equation}
\begin{split}
\left| 
\frac{\partial}{\partial \gamma_{k,pq}}
\left(
\frac{-|\Theta(y\mid\mu_{\ell;k},\Gamma_{\ell;k})|^{2}}{2}
\right)
\right|
&
=
\left| 
\left<
\Theta(y\mid\mu_{\ell;k},\Gamma_{\ell;k}),
\delta_{k, pq} \Gamma_{k}^{-1}
\left(
|\Theta(y\mid\mu_{\ell;k},\Gamma_{\ell;k})|
\right)
\right>
\right|
\\
&
\leq
\sum_{i=1}^{m}\sum_{j=1}^{m}
\left|
\left[
\delta_{k, pq} \Gamma_{k}^{-1}
\right]_{ij}
\right|
\left|\left[|\Theta(y\mid\mu_{\ell;k},\Gamma_{\ell;k})| \right]_{i}
\left[|\Theta(y\mid\mu_{\ell;k},\Gamma_{\ell;k})|\right]_{j}\right|,
\end{split}
\label{derivative-gamma-4}
\end{equation}
where $\left[
\delta_{k, pq} \Gamma_{k}^{-1}
\right]_{ij}$ is $(i,j)$-th component of matrix $\delta_{k, pq} \Gamma_{k}^{-1}$.
By using inequality of $\frac{x}{a+x} \leq x^{\frac{1}{2}}$ for $x>1$ ($a\geq 1$), and the arguments \eqref{derivative-gamma-1}--\eqref{derivative-gamma-4}, we have
\begin{align*}
&
\left| \frac{\partial}{\partial \gamma_{k,pq}} \left(H[q] - \widetilde{H}[q]\right) \right|
\\
&
\leq
\sum_{k^{\prime}\neq k}
\sqrt{\pi_k \pi_{k^{\prime}}}
|\Gamma_k|^{-1}|\Gamma_{k, pq}|
\underbrace{
\sqrt{\frac{ |\Gamma_k|}{|\Gamma_{k^{\prime}}|}}
\int_{\mathbb{R}^{m}}\frac{1}{\sqrt{(2\pi)^{m}}} 
\exp\left(
\frac{-|y|^{2}-|\Theta(y\mid\mu_{k;k'},\Gamma_{k;k'})|^{2}}{4}
\right)
dy
}
_{\displaystyle\leq \frac{2}{(1-s)^{\frac{m}{4}}} \exp
\left(
-\frac{s\alpha_{k,k^{\prime}}^{2}}{4}
\right)}
\\
&
\hspace{20pt}+
\sum_{k^{\prime}\neq k}
\sqrt{\pi_k \pi_{k^{\prime}}}
\sum_{i=1}^{m}
\left|\gamma_{k^{\prime},ip}^{-1}\right|
\underbrace{
\sqrt{\frac{ |\Gamma_k|}{|\Gamma_{k^{\prime}}|}}
\int_{\mathbb{R}^{m}}\frac{\left[\Theta(y\mid\mu_{k;k'},\Gamma_{k;k'})\right]_{i} }{\sqrt{(2\pi)^{m}}} 
\exp\left(
\frac{-|y|^{2}-|\Theta(y\mid\mu_{k;k'},\Gamma_{k;k'})|^{2}}{4}
\right)
dy
}
_{\displaystyle\leq \frac{2}{(1-s)^{\frac{m+2}{4}}} \exp
\left(
-\frac{s\alpha_{k,k^{\prime}}^{2}}{4}
\right)}
\\
&
\hspace{20pt}+
\sum_{\ell \neq k}\sqrt{\pi_{k}\pi_{\ell}}
\sum_{i=1}^{m}
|\Gamma_k|^{-1}|\Gamma_{k, pq}|
\underbrace{
\sqrt{\frac{ |\Gamma_{\ell}|}{|\Gamma_{k}|}} 
\int_{\mathbb{R}^{m}}\frac{1}{\sqrt{(2\pi )^{m}}} 
\exp
\left(
\frac{-|y|^{2}-|\Theta(y\mid\mu_{\ell;k},\Gamma_{\ell;k})|^{2}}{4}
\right)
dy}
_{\displaystyle\leq \frac{2}{(1-s)^{\frac{m}{4}}} 
\exp
\left(
-\frac{s\alpha_{\ell,k}^{2}}{4}
\right)}
\\
&
\hspace{20pt}+
\sum_{\ell \neq k}\sqrt{\pi_{k}\pi_{\ell}}
\sum_{i=1}^{m}\sum_{j=1}^{m}
\left|\left[
\delta_{k, pq} \Gamma_{k}^{-1}
\right]_{ij}\right|\\
& \hspace{40pt}\times 
\underbrace{
\sqrt{\frac{ |\Gamma_{\ell}|}{|\Gamma_{k}|}} 
\int_{\mathbb{R}^{m}}\frac{\left|\left[\Theta(y\mid\mu_{\ell;k},\Gamma_{\ell;k})\right]_{i}
\left[|\Theta(y\mid\mu_{\ell;k},\Gamma_{\ell;k})| \right]_{j}\right|}{\sqrt{(2\pi )^{m}}} 
\exp
\left(
\frac{-|y|^{2}-|\Theta(y\mid\mu_{\ell;k},\Gamma_{\ell;k})|^{2}}{4}
\right)
dy
}
_{\displaystyle\leq \frac{6}{(1-s)^{\frac{m+4}{4}}} 
\exp
\left(
-\frac{s\alpha_{\ell,k}^{2}}{4}
\right)}
\\
&
\leq
\frac{6}{(1-s)^{\frac{m+4}{4}}}
\sum_{k^{\prime}\neq k}
\sqrt{\pi_k \pi_{k^{\prime}}}
\Biggl[
\Biggl\{
|\Gamma_k|^{-1}|\Gamma_{k, pq}|
+
\underbrace{
\left(
\sum_{i=1}^{m}
\left|\gamma_{k^{\prime},ip}^{-1}\right|
\right)
}_{\displaystyle\leq \left\| \Gamma_{k^{\prime}}^{-1} \right\|_{1}}
\Biggr\}
\exp
\left(
-\frac{s\alpha_{k,k^{\prime}}^{2}}{4}
\right)
\hspace{110pt}
\\
&
\hspace{80pt}
+
\Biggl\{
|\Gamma_k|^{-1}|\Gamma_{k, pq}|
+
\underbrace{
\left(
\sum_{i=1}^{m}\sum_{j=1}^{m}
\left|\left[
\delta_{k, pq} \Gamma_{k}^{-1}
\right]_{ij}\right|
\right)
}_{\displaystyle\leq \left\| \Gamma_{k}^{-1} \right\|_{1}}
\Biggr\}
\exp
\left(
-\frac{s\alpha_{k^{\prime}, k}^{2}}{4}
\right)
\Biggr],
\end{align*}
where the last inequality is given by the same arguments in the proof of Theorem~\ref{part2}.

\vspace{3mm}

\noindent{\bf (iii)}\ 
The derivatives of \eqref{eq.A.3} with respect to $\pi_{k}$ are calculated as follows:
\begin{equation*}
\begin{split}
&\frac{\partial}{\partial \pi_{k}} \left(H[q] - \widetilde{H}[q]\right)\\
&
= \int_{\mathbb{R}^{m}}\frac{1}{\sqrt{(2\pi )^{m}}} \exp\left(-\frac{|y|^{2}}{2}\right)
\log 
\left(
1+\sum_{k^{\prime}\neq k}\frac{\pi_{k^{\prime}} |\Gamma_k|}{\pi_{k}|\Gamma_{k^{\prime}}|} 
\exp\left(
\frac{|y|^{2}-|\Theta(y\mid\mu_{k;k'},\Gamma_{k;k'})|^{2}}{2}
\right)
\right)
dy
\\
&
\hspace{20pt}+
\pi_{k} \int_{\mathbb{R}^{m}}\frac{1}{\sqrt{(2\pi )^{m}}} \exp\left(-\frac{|y|^{2}}{2}\right)
\frac{\displaystyle
\sum_{k^{\prime} \neq k}
\frac{-\pi_{k^{\prime}} |\Gamma_{k}|}{\pi_{k}^{2}|\Gamma_{k^{\prime}}|}
\exp
\left(
\frac{|y|^{2}-|\Theta(y\mid\mu_{k;k'},\Gamma_{k;k'})|^{2}}{2}
\right)
}
{\displaystyle1+
\sum_{k^{\prime} \neq k}
\frac{\pi_{k^{\prime}} |\Gamma_{k}|}{\pi_{k}|\Gamma_{k^{\prime}}|}
\exp
\left(
\frac{|y|^{2}-|\Theta(y\mid\mu_{k;k'},\Gamma_{k;k'})|^{2}}{2}
\right)
}
dy
\\
&
\hspace{20pt}+
\sum_{\ell \neq k}\pi_{\ell} \int_{\mathbb{R}^{m}}\frac{1}{\sqrt{(2\pi )^{m}}} \exp\left(-\frac{|y|^{2}}{2}\right)
\frac{\displaystyle\frac{ |\Gamma_{\ell}|}{\pi_{\ell}|\Gamma_{k}|} 
\exp
\left(
\frac{|y|^{2}-|\Theta(y\mid\mu_{\ell;k},\Gamma_{\ell;k})|^{2}}{2}
\right)
}
{\displaystyle1+
\sum_{k^{\prime} \neq \ell}
\frac{\pi_{k^{\prime}} |\Gamma_{\ell}|}{\pi_{\ell}|\Gamma_{k^{\prime}}|}
\exp
\left(
\frac{|y|^{2}-|\Theta(y\mid\mu_{\ell;k},\Gamma_{\ell;k})|^{2}}{2}
\right)
}
dy.
\\
\end{split}
\end{equation*}
Using inequalities of $\log(1+ x) \leq \sqrt{x}$ for $x \geq 0$ and  $\frac{x}{a+x} \leq x^{\frac{1}{2}}$ for $x>1$ when $a\geq 1$, we estimate
\begin{align*}
\left| \frac{\partial}{\partial \pi_{k}} \left(H[q] - \widetilde{H}[q]\right) \right|
&
\leq
\sum_{k^{\prime}\neq k}
\sqrt{\frac{\pi_{k^{\prime}}}{\pi_k}}
\underbrace{
\sqrt{\frac{ |\Gamma_k|}{|\Gamma_{k^{\prime}}|}}
\int_{\mathbb{R}^{m}}\frac{1}{\sqrt{(2\pi)^{m}}} 
\exp\left(
\frac{-|y|^{2}-|\Theta(y\mid\mu_{k;k'},\Gamma_{k;k'})|^{2}}{4}
\right)
dy
}
_{\displaystyle\leq \frac{2}{(1-s)^{\frac{m}{4}}} \exp
\left(
-\frac{s\alpha_{k,k^{\prime}}^{2}}{4}
\right)}
\\
&
\hspace{20pt}+
\sum_{k^{\prime}\neq k}
\sqrt{\frac{\pi_{k^{\prime}}}{\pi_k}}
\underbrace{
\sqrt{\frac{ |\Gamma_k|}{|\Gamma_{k^{\prime}}|}}
\int_{\mathbb{R}^{m}}\frac{1}{\sqrt{(2\pi)^{m}}} 
\exp\left(
\frac{-|y|^{2}-|\Theta(y\mid\mu_{k;k'},\Gamma_{k;k'})|^{2}}{4}
\right)
dy
}
_{\displaystyle\leq \frac{2}{(1-s)^{\frac{m}{4}}} \exp
\left(
-\frac{s\alpha_{k,k^{\prime}}^{2}}{4}
\right)}
\\
&
\hspace{20pt}+
\sum_{\ell \neq k}
\sqrt{ \frac{\pi_{\ell}}{\pi_{k}}}
\underbrace{
\sqrt{\frac{ |\Gamma_{\ell}|}{|\Gamma_{k}|}} 
\int_{\mathbb{R}^{m}}
\frac{1}{\sqrt{(2\pi )^{m}}} 
\exp
\left(
\frac{-|y|^{2}-|\Theta(y\mid\mu_{\ell;k},\Gamma_{\ell;k})|^{2}}{4}
\right)
dy}
_{\displaystyle\leq \frac{2}{(1-s)^{\frac{m}{4}}} 
\exp
\left(
-\frac{s\alpha_{\ell,k}^{2}}{4}
\right)}\\
&\leq
\frac{4}{(1-s)^{\frac{m}{4}}}
\sum_{k^{\prime}\neq k}
\sqrt{\frac{\pi_{k^{\prime}}}{\pi_k}}
\Biggl\{
\exp
\left(
-\frac{s\alpha_{k,k^{\prime}}^{2}}{4}
\right)
+
\exp
\left(
-\frac{s\alpha_{k^{\prime}, k}^{2}}{4}
\right)
\Biggr\},
\end{align*}
where the last inequality is given by the same arguments in the proof of Theorem~\ref{part2}.
The proof of Theorem~\ref{derivative-entropy} is finished.
\end{proof}
\vspace{0mm}
\subsection{Proof of Proposition \ref{explicit form 1}}\label{Proof of Lemma 4.4}
\begin{lem}
Let $m \geq K \geq 2$.
Then
\begin{equation}
H[q]  = \widetilde{H}[q] - \sum_{k=1}^{K}\frac{\pi_k}{(2\pi)^{\frac{K-1}{2}}}
\int_{\mathbb{R}^{K-1}} \exp\left(-\frac{|v|^2}{2}\right)
\log\left(1 + \sum_{k^{\prime}\neq k}  \frac{\pi_{k^{\prime}}}{\pi_k}\exp\left(\frac{|v|^2 - \left|v-u_{k^{\prime},k}\right|^{2}}{2}\right) \right)dv,
\label{explicit_form_app}
\end{equation}
where $u_{k^{\prime},k}\coloneqq[R_{k} \Sigma^{-1/2} (\mu_{k^{\prime}} - \mu_{k})]_{1:K-1} \in \mathbb{R}^{K-1}$ and $R_{k} \in \mathbb{R}^{m \times m}$ is some rotation matrix such that
\begin{equation}
\label{assumption_rotation_app}
R_{k}\Sigma^{-\frac{1}{2}}(\mu_{k^{\prime}} - \mu_k ) \in \mathrm{span}\{e_1, \cdots,e_{K-1}\}, \,\,\, k^{\prime} \in [K].
\end{equation}
Here, $\{e_i\}_{i=1}^{K-1}$ is the standard basis in $\mathbb{R}^{K-1}$, and
$u_{1:K-1}\coloneqq(u_1,\ldots,u_{K-1})^{T}\in \mathbb{R}^{K-1}$ for $u=(u_1,\ldots,u_m)^{T}\in \mathbb{R}^{m}$.
\end{lem}
\begin{proof}
First, we observe that
\begin{equation*}
H[q] = \widetilde{H}[q] - 
\underbrace{\sum_{k=1}^{K}\pi_{k} \int_{\mathbb{R}^{m}}  \mathcal{N}(x| \mu_{k}, \Sigma_{k}) \left\{\log \left(\sum_{k^{\prime}=1}^{K}\pi_{k^{\prime}} \mathcal{N}(x| \mu_{k^{\prime}}, \Sigma_{k}) \right) - \log \left(\pi_{k} \mathcal{N}(x| \mu_{k}, \Sigma_{k}) \right)  \right\}dx}_{\displaystyle \eqqcolon(\clubsuit)}.
\end{equation*}
Using \eqref{computation1} with the assumption \eqref{assumption_common}, we write 
\begin{equation*}
\begin{split}
(\clubsuit)
& =
\sum_{k=1}^{K}\pi_{k} \int_{\mathbb{R}^{m}}\frac{1}{\sqrt{(2\pi )^{m}}} \exp\left(-\frac{|y|^{2}}{2}\right)
\log \left(1+\sum_{k^{\prime} \neq  k}\frac{\pi_{k^{\prime}}}{\pi_{k}} \exp\left(\frac{|y|^{2}-\left|\left(y-\Sigma^{-\frac{1}{2}}(\mu_{k^{\prime}}-\mu_{k})\right)\right|^{2}}{2}\right) \right)dy. 
\end{split}
\end{equation*}
We choose the rotation matrix $R_{k} \in \mathbb{R}^{m \times m}$ satisfying \eqref{assumption_rotation_app} for each $k \in [K]$.
Making the change of variables as $z=R_{k}y$, we write
\makeatletter
\begin{equation*}
\begin{split}
(\clubsuit)
& =
\sum_{k=1}^{K}\pi_{k} \int_{\mathbb{R}^{m}}\frac{1}{\sqrt{(2\pi )^{m}}} \exp\left(-\frac{|z|^{2}}{2}\right)
\log \bBigg@{3.5}(1+\sum_{k^{\prime}\neq k}\frac{\pi_{k^{\prime}}}{\pi_{k}}
\underbrace{\exp\left(\frac{|z|^{2}-\left|\left(R^{T}z-\Sigma^{-\frac{1}{2}}(\mu_{k^{\prime}}-\mu_{k})\right)\right|^{2}}{2}\right)}
_
{\displaystyle=\exp\left(\frac{|z|^{2}-\left|z-u_{k^{\prime},k}\right|^{2}}{2}\right)} \bBigg@{3.5})dy, 
\end{split}
\end{equation*}
where $u_{k^{\prime},k}=[R_{k} \Sigma^{-1/2} (\mu_{k^{\prime}} - \mu_{k})]_{1:K-1} \in \mathbb{R}^{K-1}$, that is, 
\[
R_{k} \Sigma^{-\frac{1}{2}} (\mu_{k^{\prime}} - \mu_{k})
=
(u_{k^{\prime},k}, 0,\cdots,0)^{T}.
\]
We change the variables as 
\[
z_1 = v_1, \,\, \ldots, \,\, z_{K-1}=v_{K-1}, \,\, 
z_{K} = r\cos\theta_{K}, \,\,
z_{K+1} = r\sin\theta_{K}\cos\theta_{K+1}, \,\, \ldots 
\]
\[
z_{m-1} = r\sin\theta_{K} \cdots \sin\theta_{m-2}\cos\theta_{m-1}, \,\,
z_{m} = r\sin\theta_{K} \cdots \sin\theta_{m-2}\sin\theta_{m-1},
\]
where $-\infty<v_{i}<\infty$ ($i=1,\ldots,K-1$), $r>0$, $0<\theta_j<\pi$ ($j=K,\ldots,m-2$), and $0<\theta_{m-1}<2\pi$. 
Because 
\[
|z|^{2}=|v|^{2}+r^{2}, \,\,\, |z-u_{k^{\prime},k}|^{2}=|v-u_{k^{\prime},k}|^{2}+r^{2},
\]
\[
dz_1 \cdots dz_m = r^{m-K} \prod_{i=K}^{m-1} (\sin\theta_i)^{m-i-1}dv_1 \cdots dv_{K-1}\,dr\,d\theta_{K}\cdots d\theta_{m-1},
\]
we obtain
\begin{equation}
\begin{split}
(\clubsuit)&=
\sum_{k=1}^{K}\frac{\pi_{k}}{\sqrt{(2\pi )^{m}}}
\int_{0}^{2\pi}d\theta_{m-1}
\int_{0}^{\pi}d\theta_{m-2} \,\cdots
\int_{0}^{\pi}d\theta_{K}
\int_{0}^{\infty}dr
\int_{-\infty}^{\infty}dv_{K-1} \,\cdots
\int_{-\infty}^{\infty}dv_{1} \\
& \qquad \times
\exp\left(-\frac{|v|^{2}+r^{2}}{2}\right) 
\log \Biggl(1+\sum_{k^{\prime}\neq k}\frac{\pi_{k^{\prime}}}{\pi_{k}} \exp\left(\frac{|v|^{2}-\left|v-u_{k^{\prime},k}\right|^{2}}{2}\right) \Biggr)
r^{m-K} \prod_{i=K}^{m-1} (\sin\theta_i)^{m-i-1}.
\label{after_change_Cylinder}
\end{split}
\end{equation}
By the equality
\[
\begin{split}
&\frac{1}{\sqrt{(2\pi)^{m}}}
\int_{0}^{2\pi}d\theta_{m-1}
\int_{0}^{\pi}d\theta_{m-2} \,\cdots
\int_{0}^{\pi}d\theta_{K}
\int_{0}^{\infty}dr \
\exp\left(-\frac{r^2}{2} \right) r^{m-K} \prod_{i=K}^{m-1} (\sin\theta_i)^{m-i-1}
= \frac{1}{(2\pi)^{\frac{K-1}{2}}},
\end{split}
\]
we conclude \eqref{explicit_form_app} from \eqref{after_change_Cylinder}. The proof of Proposition \ref{explicit form 1} is finished.
\end{proof}

\section{Details of Section \ref{sec:experiment-error}}
\label{app:experiment-error}
We give a detailed explanation for the relative error experiment in Section~\ref{sec:experiment-error}.
We restricted the setting of the experiment to the case for the coincident covariance matrices \eqref{assumption_common} and the number of mixture components $K=2$.
Furthermore, we assumed that $\Sigma = I$, $\mu_1=0$, and $\mu_2 \sim \mathcal{N}(0, (2c)^2I)$.
In this setting, we varied the dimension $m$ of Gaussian distributions from 1 to 500 for certain parameters $(c, \pi_k)$, where we sampled $\mu_2$ 10 times for each dimension $m$.
As formulas for the entropy approximation, we employed $\widetilde{H}_{\rm ours}[q]$, $\widetilde{H}_{{\rm Huber}(R)}[q]$, and $\widetilde{H}_{\rm Bonilla}[q]$ for the method of ours, \citet{huber2008entropy}, and \citet{bonilla2019generic}, respectively, as follows:
\begin{align}
\widetilde{H}_{\rm ours}[q] 
&= \frac{m}{2} +  \frac{m}{2} \log 2 \pi + \frac{1}{2} \sum_{k=1}^{K} \pi_{k} \log |\Sigma_{k}| 
- \sum_{k=1}^{K} \pi_{k} \log \pi_{k},
\label{eq:entropy-approximation-ours} \\
\begin{split}
\widetilde{H}_{{\rm Huber}(R)}[q]
&= -\sum_{k=1}^{K}\pi_{k} \int \mathcal{N}(w|\mu_k, \Sigma_{k})\sum_{i=0}^{R}\frac{1}{i !}\left((w-\mu_k)\odot \nabla_{\widetilde{w}} \right)^{i}\log\left(\sum_{k^{\prime}=1}^{K} \pi_{k^{\prime}} \mathcal{N}(\widetilde{w}|\mu_{k^{\prime}}, \Sigma_{k^{\prime}}) \right)\Biggl|_{\widetilde{w}=\mu_k} dw, 
\end{split}
\label{eq:entropy-approximation-huber} \\
\widetilde{H}_{\rm Bonilla}[q]
&= -\sum_{k=1}^{K}\pi_{k} \log
\left( \sum_{k^{\prime}=1}^{K} \mathcal{N}(\mu_k|\mu_{k^{\prime}}, \Sigma_{k}+\Sigma_{k^{\prime}}) \right),
\label{eq:entropy-approximation-bonilla}
\end{align}
where \eqref{eq:entropy-approximation-ours} is the same as described in Section~\ref{sec:Entropy-approximtion}, \eqref{eq:entropy-approximation-huber} is based on the Taylor expansion \citep[(4)]{huber2008entropy}, and \eqref{eq:entropy-approximation-bonilla} is based on the lower bound analysis \citep[(14)]{bonilla2019generic}.
In the case for the coincident and diagonal covariance matrices $\Sigma_k=\mathrm{diag}(\sigma_{1}^{2},\ldots,\sigma_{m}^{2})$, \eqref{eq:entropy-approximation-huber} for $R=0$ or $2$ can be analytically computed as
\begin{align*}
\widetilde{H}_{\rm Huber(0)}[q]
&= -\sum_{k=1}^{K}\pi_{k}  
\log\left(\sum_{k^{\prime}=1}^{K} \pi_{k^{\prime}} \mathcal{N}(\mu_k |\mu_{k^{\prime}}, \Sigma_{k^{\prime}}) \right),\\
\widetilde{H}_{\rm Huber(2)}[q]
&= -\sum_{k=1}^{K}\pi_{k}  
\left[
\log\left(\sum_{k^{\prime}=1}^{K} \pi_{k^{\prime}} \mathcal{N}(\mu_k |\mu_{k^{\prime}}, \Sigma_{k^{\prime}}) \right)
+\frac{1}{2}
\sum_{i=1}^{m} \sigma^{2}_{i}C_{k,i}
\right],
\end{align*}
where
\begin{align*}
C_{k,i}
&\coloneqq\frac{g_{0,k}g_{2,k,i}-g_{1,k,i}^{2}}{g_{0,k}^{2}}, \quad
g_{0,k}
\coloneqq\sum_{k^{\prime}=1}^{K}\pi_{k^{\prime}} \mathcal{N}(\mu_k |\mu_{k^{\prime}}, \Sigma_{k^{\prime}}), \\
g_{1,k,i}
&\coloneqq\sum_{k^{\prime}=1}^{K}\pi_{k^{\prime}} 
\frac{\mu_{k,i}-\mu_{k^{\prime},i}}{\sigma_{i}^{2}}
\mathcal{N}(\mu_k |\mu_{k^{\prime}}, \Sigma_{k^{\prime}}), \\
g_{2,k,i}
&\coloneqq\sum_{k^{\prime}=1}^{K}\pi_{k^{\prime}} 
\left[
    \left(
    \frac{\mu_{k,i}-\mu_{k^{\prime},i}}{\sigma_{i}^{2}}
    \right)^2
    -\frac{1}{\sigma_{i}^{2}}
\right]
\mathcal{N}(\mu_k |\mu_{k^{\prime}}, \Sigma_{k^{\prime}}).
\end{align*}

In the following, we show the tractable formula of the true entropy, which is used in the experiment in Section~\ref{sec:experiment-error}.
In the case for the coincident covariance matrices and $K=2$, we can reduce the integral in \eqref{explicit_form} to a one-dimensional Gaussian integral as follows:
\begin{align*}
H[q]
=\widetilde{H}[q] -\sum_{k\neq k\prime}\frac{\pi_k}{\sqrt{2\pi}}
\int_{\mathbb{R}}
\exp\left(-\frac{|v|^2}{2}\right)
\log\left(1 + \frac{\pi_{k^{\prime}}}{\pi_k}\exp\left(\frac{|v|^2 - \left|v-u_{k^{\prime},k}\right|^{2}}{2}\right) \right)dv.
\end{align*}
Furthermore, we can choose the rotation matrix $R_k$ in Proposition~\ref{explicit form 1} such that
\begin{align*}
u_{k^{\prime},k}=\left[R_{k}\Sigma^{-\frac{1}{2}} (\mu_{k^{\prime}} - \mu_{k}) \right]_{1:1} \geq 0,
\end{align*}
that is,
\begin{align*}
u_{k^{\prime},k}=\left|u_{k^{\prime},k}\right|=\left|\Sigma^{-\frac{1}{2}}(\mu_{k^{\prime}} - \mu_{k})\right|= 2 |a|, \quad
a\coloneqq\frac{\Sigma^{-\frac{1}{2}}(\mu_1 - \mu_2)}{2}.
\end{align*}
Hence, we have
\begin{align*}
|v|^2 - |v-u_{k^{\prime},k}|^{2} = -u_{k^{\prime},k}^{2} + 2 vu_{k^{\prime},k} = -4|a|^2 + 4v|a|.
\end{align*}
Therefore, by making the change of variables as $v = \sqrt{2}\, t$, we conclude that
\begin{align}
H[q] 
&= \widetilde{H}[q] - \sum_{k=1}^{2}\frac{\pi_k}{\sqrt{2\pi}}
\int_{\mathbb{R}}
\exp\left(-\frac{|v|^2}{2}\right)
\log\left(1 + \frac{\pi_{k^{\prime}}}{\pi_k}
\exp
\left(
-2|a|^2 +2  v|a|
\right) \right)dv \nonumber \\ 
&= \widetilde{H}[q]
-
\sum_{k=1}^{2}\frac{\pi_k}{\sqrt{\pi}}
\int_{\mathbb{R}}
\exp(-t^2)
\log\left(1 + \frac{\pi_{k^{\prime}}}{\pi_k}
\exp
\left(
-2|a|^2 +2\sqrt{2}|a|t
\right) \right)dt. \label{eq:entropy-k2}
\end{align}
Note that the integration of \eqref{eq:entropy-k2} can be efficiently executed using the Gauss-Hermite quadrature.

\section{Application example: Variational inference to BNNs with Gaussian mixtures}\label{Variational inference for Bayesian neural networks}
\subsection{Overview: 
Variational inference with Gaussian mixtures}

Let $f(\,\cdot \ ;w)$ be the base model that is a function parameterized by weights $w \in \mathbb{R}^{m}$, e.g., the neural network.
Let $p(w)$ and $p(y|f(x;w))$ be the prior distribution of the weights and the likelihood of the model, respectively. 
For supervised learning, let $D=\{(x_{n}, y_{n})\}_{n=1}^{N}$ be a dataset where $x_{n}\in\mathbb{R}^{d_x}$ and $y_{n}\in\mathbb{R}^{d_y}$ are the input and output, respectively, and the input-output pair $(x_n, y_n)$ is independently identically distributed.
The Bayesian posterior distribution $p(w|D)$ is formulated as
\[
p(w|D) \propto p(w) \prod_{n=1}^{N} p(y_n|f(x_n;w)).
\]

The goal of variational inference is to minimize the Kullback-Leibler (KL) divergence between a variational family $q_{\theta}(w)$ and posterior distribution $p(w|D)$ given by
\[
D_{\text{KL}}(q_{\theta}(w) \,||\, p(w|D) )\coloneqq-\int q_{\theta}(w)\log \left(\frac{p(w|D)}{q_{\theta}(w)}\right)dw,
\]
which is equivalent to maximizing the evidence lower bound (ELBO) given by
\begin{align}
\label{eq:elbo}
\mathcal{L}(\theta)\coloneqq L(\theta) + \int q_{\theta}(w) \log(p(w))\,dw + H[q_{\theta}],
\end{align}
(see, e.g., \citet{barber1998ensemble, bishop2006PRML, hinton1993keeping}).
The first term of \eqref{eq:elbo} is the expected log-likelihood given by
\[
L(\theta)\coloneqq\sum_{n=1}^{N}E_{q_{\theta}(w)}[\log p(y_n|f(x_n;w))],
\]
the second term is the cross-entropy between a variational family $q(w)$ and prior distribution $p(w)$, and the third term is the entropy of $q(w)$ given by
\begin{align*}
H[q_{\theta}] \coloneqq -\int q_{\theta}(w) \log( q_{\theta}(w))\,dw.
\end{align*}
Here, we choose a unimodal Gaussian distribution as a prior, that is, $p(w) = \mathcal{N}(w| \mu_{0}, \Sigma_{0})$, and
we choose a Gaussian mixture distribution as a variational family, that is,
\[
q_{\theta}(w)= \sum_{k=1}^{K}\pi_{k} \mathcal{N}(w|\mu_{k}, \Sigma_{k}), \ \ \theta = (\pi_{k}, \mu_{k}, \Sigma_{k})_{k=1}^{K},
\]
where $K \in \mathbb{N}$ is the number of mixture components, and $\pi_k \in (0,1]$ are mixing coefficients constrained by $\sum_{k=1}^{K}\pi_k=1$.
Here, $\mathcal{N}(w| \mu_{k}, \Sigma_{k})$ is the Gaussian distribution with a mean $\mu_k \in \mathbb{R}^{m}$ and covariance matrix $\Sigma_k \in \mathbb{R}^{m \times m}$, that is,
\[
\mathcal{N}(w|\mu_{k}, \Sigma_{k}) = \frac{1}{\sqrt{(2\pi)^{m} |\Sigma_{k}|}}\exp\left(-\frac{1}{2}\left\|w-\mu_k\right\|_{\Sigma_k}^{2}\right),
\]
where $|\Sigma_{k}|$ is the determinant of matrix $\Sigma_{k}$, and $\|x\|_{\Sigma}^{2}\coloneqq x \cdot( \Sigma^{-1}x)$ for a vector $x \in \mathbb{R}^{m}$ and a positive definite matrix $\Sigma \in \mathbb{R}^{m \times m}$.
\par
In the following, we investigate the ingredients in \eqref{eq:elbo}.
The expected log-likelihood $L(\theta)$ is analytically intractable due to the nonlinearity of the function $f(x ;w)$.
To overcome this difficulty, we follow the stochastic gradient variational Bayes (SGVB) method \citep{kingma2013auto, kingma2015variational, rezende2014stochastic}, which employs the reparametric trick and minibatch-based Monte Carlo sampling.
Let $S \subset D$ be a minibatch set with minibatch size $M$.
By reparameterizing weights as $w=\Sigma_{k}^{1/2}\varepsilon + \mu_{k}$, we can rewrite the expected log-likelihood $L(\theta)$ as
\[
\begin{split}
L(\theta) &= 
\sum_{n=1}^{N} \sum_{k=1}^{K}\pi_{k} \int \mathcal{N}(w|\mu_k, \Sigma_k) \log p(y_n|f(x_n;w))\, dw \\
&=
\sum_{n=1}^{N} \sum_{k=1}^{K}\pi_{k} \int \mathcal{N}(\varepsilon|0, I)  \log p(y_n|f(x_{n};\Sigma_{k}^{\frac{1}{2}}\varepsilon + \mu_{k}))\, d\varepsilon. 
\end{split}
\]
By minibatch-based Monte Carlo sampling, we obtain the following unbiased estimator $\widehat{L}^{\rm {SGVB}}(\theta)$ of the expected log-likelihood $L(\theta)$ as
\begin{equation}
\begin{split}
L(\theta) & \approx \widehat{L}^{\rm {SGVB}}(\theta)
\\
& \coloneqq \sum_{k=1}^{K}\pi_{k}\frac{N}{M}\sum_{i \in S} \log p(y_i|f(x_{i};\Sigma_{k}^{\frac{1}{2}}\varepsilon + \mu_{k})),
\label{expected log like}
\end{split}
\end{equation}
where we employ noise sampling $\varepsilon \sim \mathcal{N}(0, I)$ once per mini-batch sampling (\citet{kingma2015variational, titsias2014doubly}).
On the other hand, the cross-entropy between a Gaussian mixture and unimodal Gaussian distribution can be analytically computed as
\begin{equation}
\begin{split}
\int q_\theta(w) \log(p(w)) \,dw 
= -\sum_{k=1}^{K}\frac{\pi_{k}}{2} \Biggl\{ m\log 2 \pi + \log |\Sigma_{0}|
+ \mathrm{Tr}(\Sigma_{0}^{-1}\Sigma_{k}) + \left\| \mu_{k}-\mu_{0} \right\|_{\Sigma_{0}}^{2} \Biggr\}.
\label{cross entropy}
\end{split}
\end{equation}
\par
For the entropy term $H[q_{\theta}]$ we employ the approximate entropy \eqref{def:approx_of_entropy}, that is, 
\begin{equation}
\begin{split}
H[q] &\approx \widetilde{H}[q_{\theta}]
\\
&=\, -\sum_{k=1}^{K}\pi_{k} \int \mathcal{N}(w|\mu_k, \Sigma_{k}) \log\left(\pi_{k} \mathcal{N}(w|\mu_k, \Sigma_{k}) \right)dw \\
& = \, \frac{m}{2} +  \frac{m}{2} \log 2 \pi + \frac{1}{2} \sum_{k=1}^{K} \pi_{k} \log |\Sigma_{k}| 
- \sum_{k=1}^{K} \pi_{k} \log \pi_{k}. 
\end{split}
\label{approx_of_entropy-app}
\end{equation}
\par
In summary, we approximate the ELBO $\mathcal{L}(\theta)$ using \eqref{expected log like}, \eqref{cross entropy}, and \eqref{approx_of_entropy-app} as
\begin{equation}
\begin{split}
\mathcal{L}(\theta) &\approx \widehat{\mathcal{L}}(\theta)
\\
&\coloneqq\, \widehat{L}^{\rm {SGVB}}(\theta) + \int q_{\theta}(w) \log(p(w)) \,dw + \widetilde{H}[q_\theta] \\
&=\, 
\sum_{k=1}^{K}\pi_{k}\left(\widehat{\mathcal{L}}(\mu_k, \Sigma_k) - \log\pi_{k}\right),
\label{approx ELBO}
\end{split}
\end{equation}
where $\widehat{\mathcal{L}}(\mu_k, \Sigma_k)$ are ELBOs of the unimodal Gaussian distributions $\mathcal{N}(w|\mu_{k}, \Sigma_{k})$ given by
\[
\begin{split}
\widehat{\mathcal{L}}(\mu_k, \Sigma_k) 
\coloneqq &\frac{N}{M}\sum_{i \in S} \log p(y_i|f(x_{i};\Sigma_{k}^{\frac{1}{2}}\varepsilon_{S} + \mu_{k}))
- \frac{1}{2} \Biggl\{ m\log 2 \pi + \log |\Sigma_{0}|  
+ \mathrm{Tr}(\Sigma_{0}^{-1}\Sigma_{k}) + \left\| \mu_{k}-\mu_{0} \right\|_{\Sigma_{0}}^{2} \Biggr\} 
\\
&\quad + \frac{m}{2}(1 + \log 2\pi) + \frac{1}{2} \log|\Sigma_{k}|.
\end{split}
\]

\subsection{Experiment of BNN with Gaussian mixtures on toy task}
\label{sec:experiment-bnn}

We employed the approximate entropy \eqref{def:approx_of_entropy}
for variational inference to a BNN 
whose posterior was modeled by the Gaussian mixture, which we call {\it {BNN-GM}} in the following. 
We conducted the toy $1$~D regression task~\citep{he2020bayesian} to observe the uncertainty estimation capability of the BNN-GM.
In particular, we observed that the BNN-GM could capture larger uncertainty than the deep ensemble~\citep{lakshminarayanan2016simple}.
The task was to learn a curve $y = x \sin(x)$ from a training dataset that consisted of $20$ points sampled from a noised curve $y = x \sin(x) + \varepsilon$, $\varepsilon \sim \mathcal{N}(0, 0.1^2)$.
Refer to Appendix~\ref{app:experiment-bnn} for the detail of implementations.
We compared the BNN-GM with the deep ensemble of DNNs, the BNN with the single unimodal Gaussian, and the deep ensemble of BNNs with the single unimodal Gaussian, see Figure~\ref{fig:bnngm}.

From the result in Figure~\ref{fig:bnngm}, we can observe the following.
First, every method can represent uncertainty on the area where train data do not exist.
However, the BNN with a single unimodal Gaussian can represent smaller uncertainty than other methods (see around $x=0$).
Second, as increasing the number of components, the BNN-GM can represent larger uncertainty than the deep ensemble of DNNs or BNNs.
Therefore, there is a qualitative difference in uncertainty estimation between BNN-GMs and deep ensembles.
Finally, the BNN-GM of 10 components has weak learners with small mixture coefficients.
We suppose that this phenomenon is caused by the entropy regularization for the Gaussian mixture.
Note that we do not claim the superiority of BNN-GMs to deep ensembles.

\begin{figure*}[t] 
\centering
\begin{minipage}[b]{0.32\linewidth}
\includegraphics[width=\linewidth]{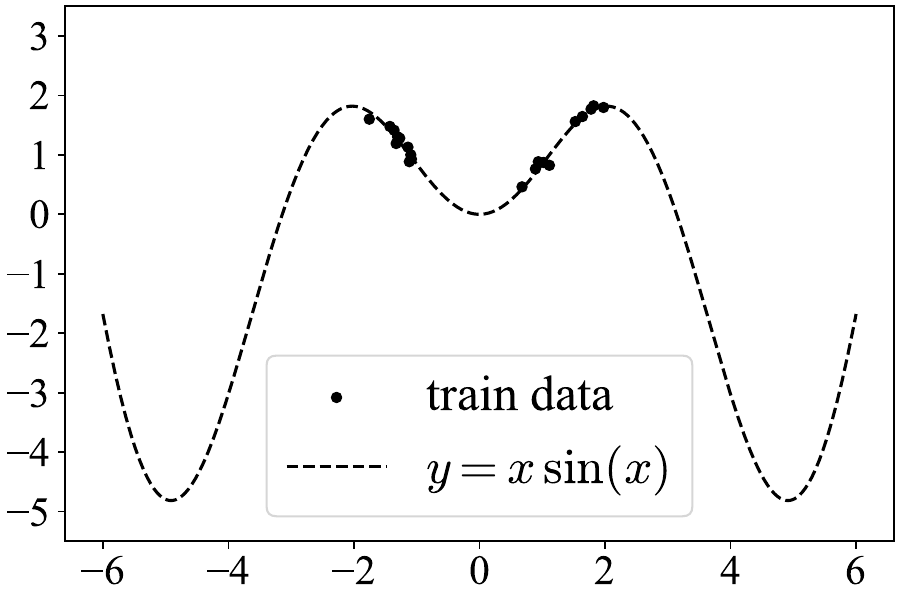}
\begin{center} \vspace{-5pt}\small (a) \ Dataset\end{center}\label{fig:dataset}
\end{minipage}
\begin{minipage}[b]{0.32\linewidth}
\includegraphics[width=\linewidth]{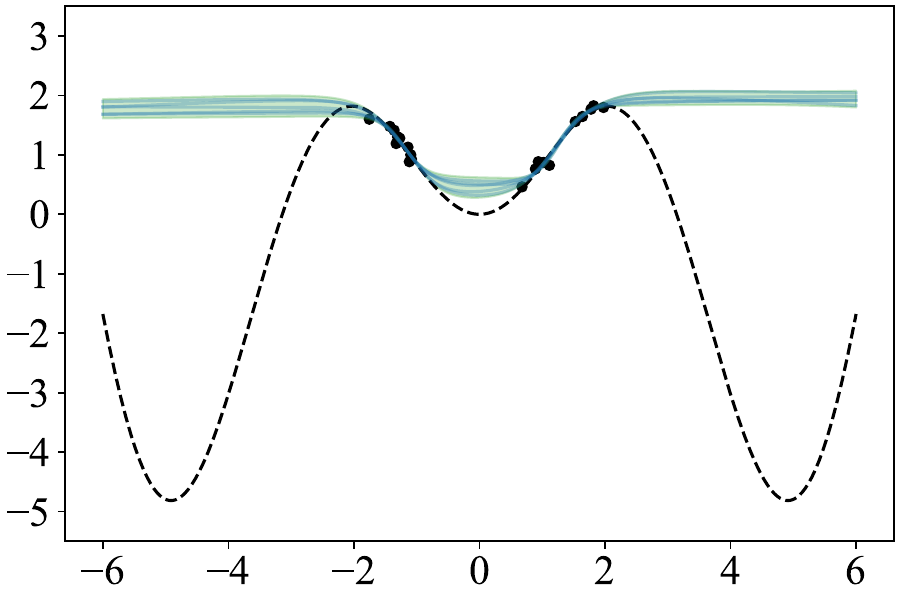}
\begin{center}\vspace{-5pt}\small (b) \ Deep ensemble of 5 DNNs\end{center}\label{fig:5DNN}
\end{minipage}
\begin{minipage}[b]{0.32\linewidth}
\includegraphics[width=\linewidth]{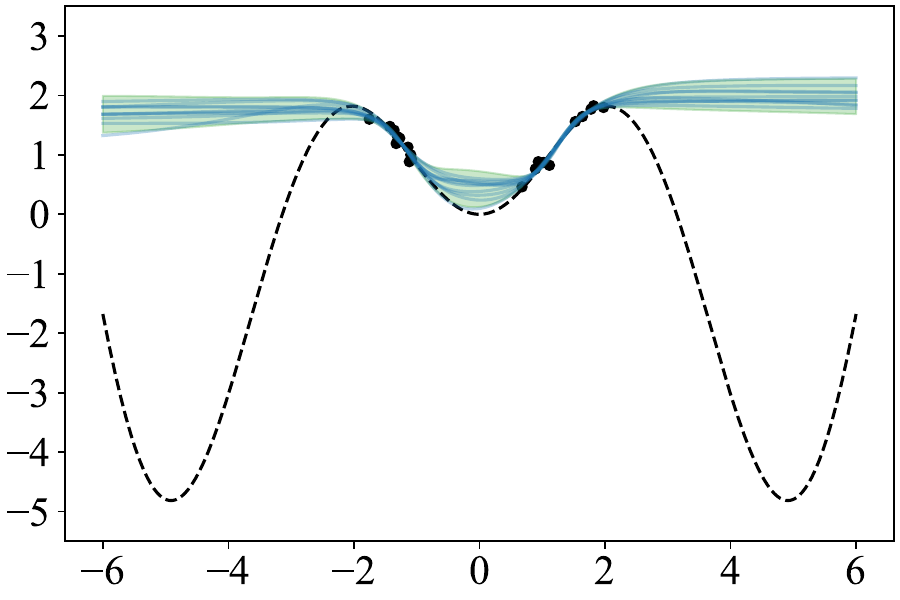}
\begin{center}\vspace{-5pt}\small (c) \ Deep ensemble of 10 DNNs\end{center}\label{fig:10DNN}
\end{minipage}
\\
\begin{minipage}[b]{0.32\linewidth}
\includegraphics[width=\linewidth]{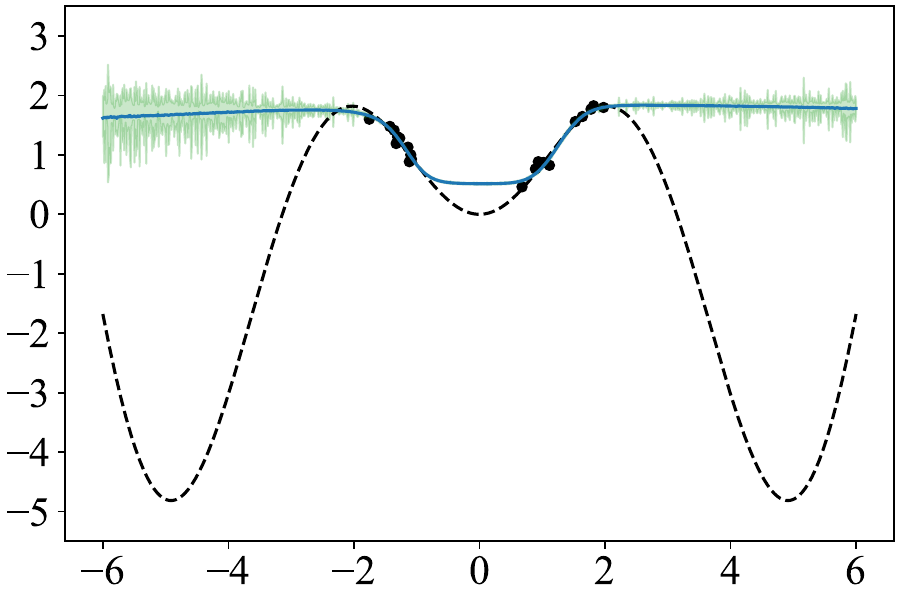}
\begin{center}\vspace{-5pt}\small (d) \ BNN (single Gaussian)\end{center}\label{fig:1BNN}
\end{minipage}
\begin{minipage}[b]{0.32\linewidth}
\includegraphics[width=\linewidth]{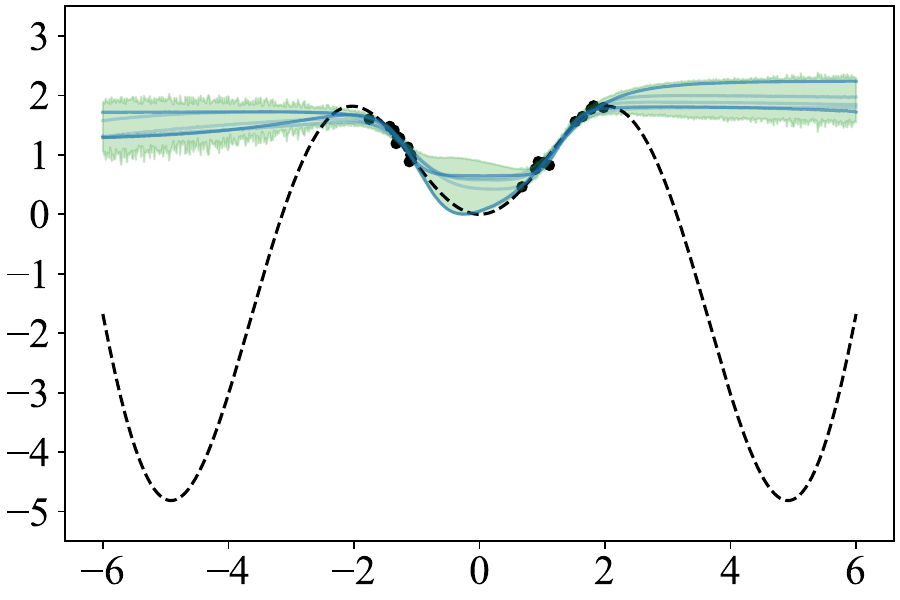}
\begin{center}\vspace{-5pt}\small (e) \ BNN-GM of 5 components\end{center}\label{fig:5BNN}
\end{minipage}
\begin{minipage}[b]{0.32\linewidth}
\includegraphics[width=\linewidth]{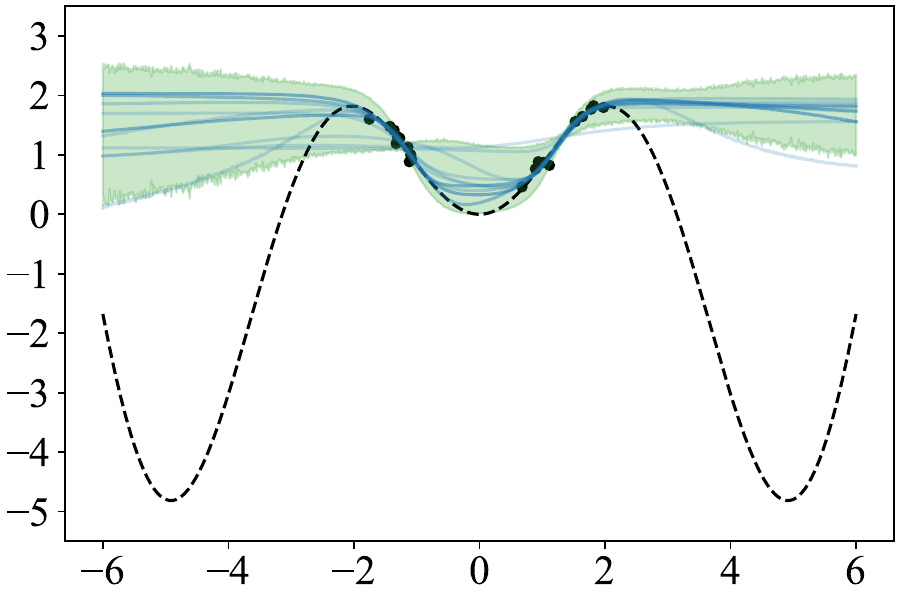}
\begin{center}\vspace{-5pt}\small (f) \ BNN-GM of 10 components\end{center}\label{fig:10BNN}
\end{minipage}
\\
\begin{minipage}[b]{0.32\linewidth}
\includegraphics[width=\linewidth]{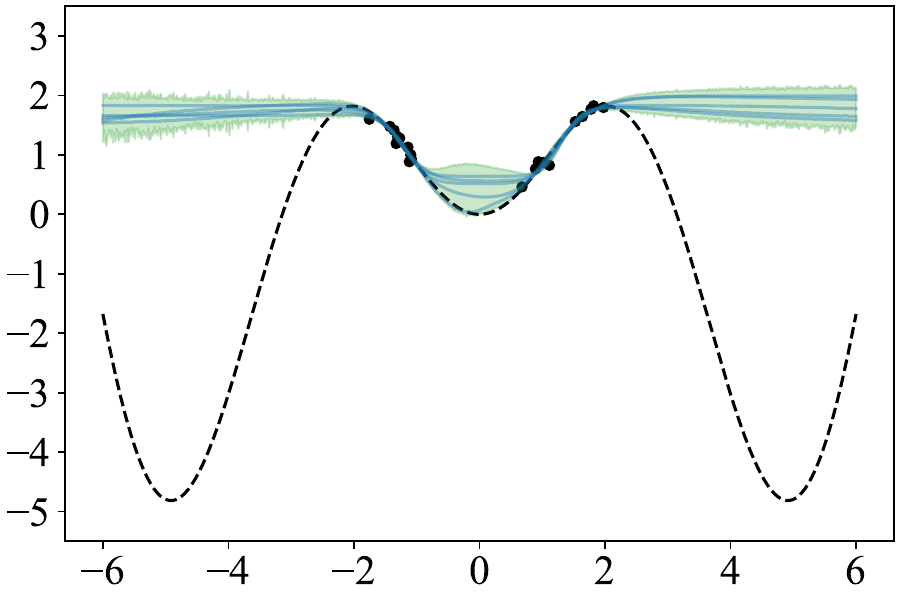}
\begin{center}\vspace{-5pt}\small (g) \ Deep ensemble of 5 BNNs\end{center}\label{fig:5-1BNN}
\end{minipage}
\begin{minipage}[b]{0.32\linewidth}
\includegraphics[width=\linewidth]{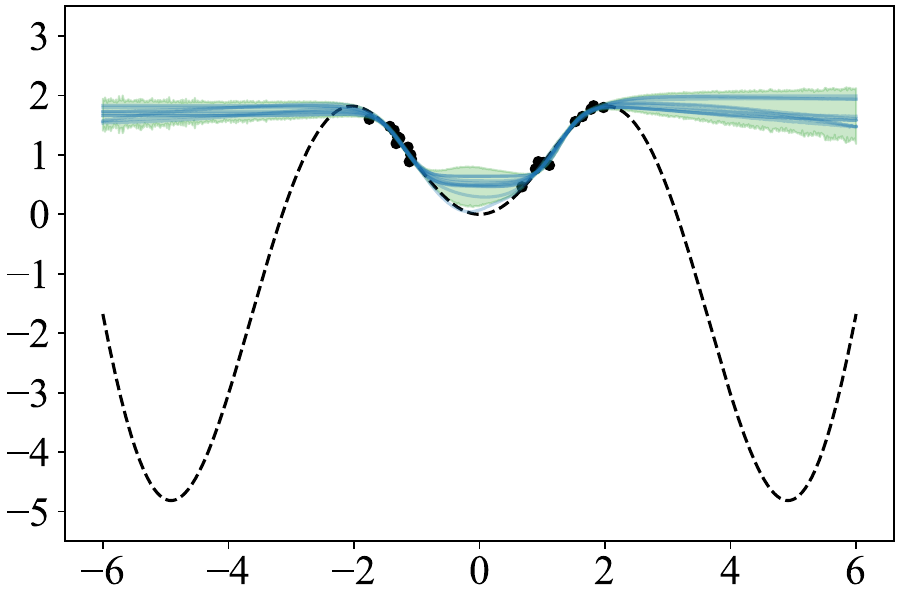}
\begin{center}\vspace{-5pt}\small (h) \ Deep ensemble of 10 BNNs\end{center}\label{fig:10-1BNN}
\end{minipage}
\caption{\small Uncertainty estimation on the toy $1$D regression task.
Each line indicates one component of the ensemble or mixture, and the filled region indicates the mean value of the prediction with the standard deviation$\times(\pm 2)$.
As for BNN-GMs, the stronger the color intensity of the line, the larger the mixture coefficients of the model.
}
\label{fig:bnngm}
\end{figure*}

\subsection{Details of experiment}
\label{app:experiment-bnn}
We give a detailed explanation for the toy $1$D regression experiment in Appendix~\ref{sec:experiment-bnn}.
The task is to learn a curve $y = x \sin(x)$ from a training dataset that consists of $20$ points sampled from the noised curve $y = x \sin(x) + \varepsilon$, $\varepsilon \sim \mathcal{N}(0, 0.1^2)$, see Figure~\ref{fig:bnngm}.
To obtain the regression model of the curve, we used the neural network model as the base model that had two hidden layers and $8$ hidden units in each layer with erf activation.
Regarding the Bayes inference for the BNN-GM, we modeled the prior as $\mathcal{N}(0, \sigma_w)$ and the variational family as the Gaussian mixture.
Furthermore, we chose the likelihood function as the Gaussian distribution:
\begin{align*}
    p(y | f(x; w))
    = \mathcal{N}(f(x;w), \sigma_y).
\end{align*}
Then, we performed the SGVB method based on the proposed ELBO \eqref{approx ELBO}, where the batch size was equal to the dataset size.
Hyperparameters were as follows: epochs $= 100$, learning rate $= 0.05$, $\sigma_w = 10^6$, $\sigma_y = 10^{-2}$.

\end{document}